\title{Causal Fairness for Outcome Control}
\author{%
  Drago Plecko \;{\normalfont and}\;  Elias Bareinboim \\
  Department of Computer Science\\
  Columbia University\\
  %New York, NY 10027 \\
  \texttt{dp3144@columbia.edu, eb@cs.columbia.edu} \\
  % examples of more authors
  % \AND
  % Coauthor \\
  % Affiliation \\
  % Address \\
  % \texttt{email} \\
  % \And
  % Coauthor \\
  % Affiliation \\
  % Address \\
  % \texttt{email} \\
  % \And
  % Coauthor \\
  % Affiliation \\
  % Address \\
  % \texttt{email} \\
}
\begin{document}

\maketitle

\begin{abstract}
  As society transitions towards an AI-based decision-making infrastructure, an ever-increasing number of decisions once under control of humans are now delegated to automated systems. 
Even though such developments make various parts of society more efficient, a large body of evidence suggests that a great deal of care needs to be taken to make such automated decision-making systems fair and equitable, namely, taking into account sensitive attributes such as gender, race, and religion. 
In this paper, we study a specific decision-making task called \textit{outcome control} in which an automated system aims to optimize an outcome variable $Y$ while being fair and equitable. 
The interest in such a setting ranges from interventions related to criminal justice and welfare, all the way to clinical decision-making and public health. 
In this paper, we first analyze through causal lenses the notion of \textit{benefit}, which captures how much a specific individual would benefit from a positive decision, counterfactually speaking, when contrasted with an alternative, negative one. We introduce the notion of benefit fairness, which can be seen as the minimal fairness requirement in decision-making, and develop an algorithm for satisfying it.
We then note that the benefit itself may be influenced by the protected attribute, and propose causal tools which can be used to analyze this.
Finally, if some of the variations of the protected attribute in the benefit are considered as discriminatory, the notion of benefit fairness may need to be strengthened, which leads us to articulating a notion of causal benefit fairness. Using this notion, we develop a new optimization procedure capable of maximizing $Y$ while ascertaining causal fairness in the decision process.
\end{abstract}

\section{Introduction}
Decision-making systems based on artificial intelligence and machine learning are being increasingly deployed in real-world settings where they have life-changing consequences on individuals and on society more broadly, including hiring decisions, university admissions, law enforcement, credit lending, health care access, and finance \citep{khandani2010consumer,mahoney2007method,brennan2009evaluating}.  Issues of unfairness and discrimination are pervasive in those settings when decisions are being made by humans, and remain (or are potentially amplified) when  decisions are made using  machines with little transparency or accountability. Examples include reports on such biases in decision support systems for sentencing \cite{ProPublica}, face-detection \cite{pmlr-v81-buolamwini18a}, online advertising \cite{Sweeney13,Datta15}, and authentication \cite{Sanburn15}. A large part of the underlying issue is that AI systems designed to make decisions are trained with data that contains various historical biases and past discriminatory decisions against certain protected groups, and such systems may potentially lead to an even more discriminatory process, unless they have a degree of fairness and transparency.

In this paper, we focus on the specific task of \textit{outcome control}, characterized by a decision $D$ which precedes the outcome of interest $Y$. 
The setting of outcome control appears across a broad range of applications, from clinical decision-making \citep{hamburg2010path} and public health \citep{insel2009translating}, to criminal justice \citep{larson2016how} and various welfare interventions \citep{coston2020counterfactual}. We next discuss two lines of literature related to our work.

Firstly, a large body of literature in reinforcement learning \citep{sutton1998reinforcement, szepesvari2010algorithms} and policy learning \citep{dudik2011doubly, qian2011performance, kitagawa2018should, kallus2018balanced, athey2021policy} analyzes the task of optimal decision-making. Often, these works consider the conditional average treatment effect (CATE) that measures how much probabilistic gain there is from a positive versus a negative decision for a specific group of individuals when experimental data is available. Subsequent policy decisions are then based on the CATE, a quantity that will be important in our approach as well. The focus of this literature is often on developing efficient procedures with desirable statistical properties, and issues of fairness have not traditionally been explored in this context. 

On the other hand, there is also a growing literature in fair machine learning, that includes various different settings. One can distinguish three specific and different tasks, namely (1) bias detection and quantification for currently deployed policies; (2) construction of fair predictions of an outcome; (3) construction of fair decision-making policies. Most of the work in fair ML falls under tasks (1) and (2), whereas our setting of outcome control falls under (3). Our work also falls under the growing literature that explores fairness through a causal lens \citep{kusner2017counterfactual, kilbertus2017avoiding, nabi2018fair, zhang2018fairness, zhang2018equality, wu2019pc, chiappa2019path, plevcko2020fair}. 
\begin{wrapfigure}{r}{0.28\textwidth}
\centering
\vspace{-0.15in}
         \begin{tikzpicture}
	 [>=stealth, rv/.style={thick}, rvc/.style={triangle, draw, thick, minimum size=7mm}, node distance=18mm]
	 \pgfsetarrows{latex-latex};
	 \begin{scope}
		\node[rv] (0) at (0,1) {$Z$};
	 	\node[rv] (1) at (-1.5,0) {$X$};
	 	\node[rv] (2) at (0,-1) {$W$};
	 	\node[rv] (3) at (1.5,0.6) {$Y$};
            \node[rv] (4) at (1.5,-0.6) {$D$};
	 	\draw[->] (1) -- (2);
		\draw[->] (0) -- (3);
	 	\path[->] (1) edge[bend left = 0] (3);
		\path[<->] (1) edge[bend left = 30, dashed] (0);
	 	\draw[->] (2) -- (3);
		\draw[->] (0) -- (2);
  		\draw[->] (0) -- (4);
    	\draw[->] (1) -- (4);
      	\draw[->] (2) -- (4);
            \draw[->] (4) -- (3);
	 \end{scope}
	 \end{tikzpicture}
          \caption{Standard Fairness Model (SFM) \citep{plecko2022causal} extended for outcome control.}
          \label{fig:sfm}
\vspace{-0.2in}
\end{wrapfigure}
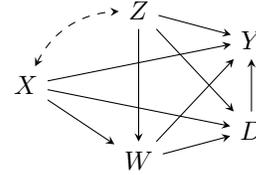
For concreteness, consider the causal diagram in Fig.~\ref{fig:sfm} that represents the setting of outcome control, with $X$ the protected attribute, $Z$ a possibly multidimensional set of confounders, $W$ a set of mediators. Decision $D$ is based on the variables $X, Z$, and $ W$, and the outcome $Y$ depends on all other variables in the model. In this setting, we also assume that the decision-maker is operating under budget constraints.  

Previous work introduces a fairness definition that conditions on the potential outcomes of the decision $D$, written $Y_{d_0}, Y_{d_1}$, and ensures that the decision $D$ is independent of the protected attribute $X$ for any fixed value of $Y_{d_0}, Y_{d_1}$ \citep{imai2020principal}. Another related work, in a slightly different setting of risk assessment \citep{coston2020counterfactual}, proposes conditioning on the potential outcome under a negative decision, $Y_{d_0}$, and focuses on equalizing counterfactual error rates. 

However, the causal approaches mentioned above take a different perspective from the policy learning literature, in which policies are built based on the CATE of the decision $D$, written $\ex[Y_{d_1} - Y_{d_0} \mid x, z, w]$, which we will refer to as \textit{benefit}. 
Focusing exclusively on the benefit, though, will provide no fairness guarantees apriori. 
In particular, as can be seen from Fig.~\ref{fig:sfm}, the protected attribute $X$ may influence the effect of $D$ on $Y$ in three very different ways: (i) along the direct pathway $X \to Y$; (ii) along the indirect pathway $X \to W \to Y$; (iii) along the spurious pathway $X \bidir Z \to Y$. Often, the decision-maker may view these causal effects differently, and may consider only some of them as discriminatory. Currently, no approach in the literature allows for a principled way of detecting and removing discrimination based on the notion of benefit, while accounting for different underlying causal mechanisms that may lead to disparities.

In light of the above, the goal of this paper is to analyze the foundations of outcome control from a causal perspective of the decision-maker. Specifically, we develop a causal-based decision-making framework for modeling fairness with the following contributions:
\begin{enumerate}[label=(\roman*)]
    \item We introduce Benefit Fairness (BF, Def.~\ref{def:benefit-fairness}) to ensure that at equal levels of the benefit, the protected attribute $X$ is independent of the decision $D$. We then develop an algorithm for achieving BF (Alg.~\ref{algo:optimal-pbf}) and prove optimality guarantees (Thm.~\ref{thm:bf-optimality}),
    \item We develop Alg.~\ref{algo:benefit-decomposition} that determines which causal mechanisms from $X$ to the benefit (direct, indirect, spurious) explain the difference in the benefit between groups. The decision-maker can then decide which causal pathways are considered as discriminatory. \label{bullet:stepII}
    \item We define the notion of causal benefit fairness (CBF, Def.~\ref{def:causal-BF}), 
    which models discriminatory pathways in a fine-grained way. 
    We further develop an algorithm (Alg.~\ref{algo:utilitarian-pbf}) that ensures the removal of such discriminatory effects, and show some theoretical guarantees (Thm.~\ref{thm:cbf-optimality}).
\end{enumerate}
\subsection{Preliminaries}
We use the language of structural causal models (SCMs) as our basic semantical framework \citep{pearl:2k}. A structural causal model (SCM) is
a tuple $\mathcal{M} := \langle V, U, \mathcal{F}, P(u)\rangle$ , where $V$, $U$ are sets of
endogenous (observable) and exogenous (latent) variables 
respectively, $\mathcal{F}$ is a set of functions $f_{V_i}$,
one for each $V_i \in V$, where $V_i \gets f_{V_i}(\pa(V_i), U_{V_i})$ for some $\pa(V_i)\subseteq V$ and
$U_{V_i} \subseteq U$. $P(u)$ is a strictly positive probability measure over $U$. Each SCM $\mathcal{M}$ is associated to a causal diagram $\mathcal{G}$ over the node set $V$, where $V_i \rightarrow V_j$ if $V_i$ is an argument of $f_{V_j}$, and $V_i \bidir V_j$ if the corresponding $U_{V_i}, U_{V_j}$ are not independent. Throughout, our discussion will be based on the specific causal diagram known as the standard fairness model (SFM, see Fig.~\ref{fig:sfm}). Further, an instantiation of the exogenous variables $U = u$ is called a \textit{unit}. By $Y_{x}(u)$ we denote the potential response of $Y$ when setting $X=x$ for the unit $u$, which is the solution for $Y(u)$ to the set of equations obtained by evaluating the unit $u$ in the submodel $\mathcal{M}_x$, in which all equations in $\mathcal{F}$ associated with $X$ are replaced by $X = x$.
\section{Foundations of Outcome Control}
In the setting of outcome control, we are interested in the following decision-making task:
\begin{definition}[Decision-Making Optimization] \label{def:dm-optimization}
    Let $\mathcal{M}$ be an SCM compatible with the SFM. We define the optimal decision problem as finding the (possibly stochastic) solution to the following optimization problem given a fixed budget $b$:
    \begin{alignat}{2}
    D^* = &\argmax_{D(x,z,w)}        &\qquad& \ex[Y_D] \label{eq:dm-opt-1}\\
    &\text{subject to} &      & P(d) \leq b \label{eq:dm-opt-2}.
    \end{alignat}
\end{definition}
We next discuss two different perspectives on solving the above problem. First, we discuss the problem solution under perfect knowledge, assuming that the underlying SCM and the unobserved variables are available to us (we call this the oracle's perspective). Then, we move on to solving the problem from the point of view of the decision-maker, who only has access to the observed variables in the model and a dataset generated from the true SCM.
\subsection{Oracle's Perspective} \label{sec:oracle}
The following example, which will be used throughout the paper, is accompanied by a \href{https://anonymous.4open.science/api/repo/outcome-control-854C/file/vignette.html}{vignette} that performs inference using finite sample data for the different computations described in the sequel. We introduce the example by illustrating the intuition of outcome control through the perspective of an all-knowing oracle: 
\setcounter{example}{0}
\begin{example}[Cancer Surgery - continued] \label{ex:oracle-and-judge}
A clinical team has access to information about the sex of cancer patients ($X = x_0$ male, $X = x_1$ female) and their degree of illness severity determined from tissue biopsy ($W \in [0, 1]$). They wish to optimize the 2-year survival of each patient ($Y$), and the decision $D = 1$ indicates whether to perform surgery. The following SCM $\mathcal{M}^*$ describes the data generating mechanisms (unknown to the team):
    \begin{empheq}[left ={\mathcal{F}^*, P^*(U): \empheqlbrace}]{align}
		        X  &\gets U_X \label{eq:cancer-scm-1} \\
                    W &\gets \begin{cases} \sqrt{U_W} \text{ if } X = x_0, \\
                    1 - \sqrt{1 - U_W} \text{ if } X = x_1
                    \end{cases}  \\
 		        D & \gets f_D(X, W) \\
 		        Y  &\gets \mathbb{1}(U_Y + \frac{1}{3} WD - \frac{1}{5} W > 0.5). \\
 		            U_X &\in \{0,1\}, P(U_X = 1) = 0.5, \\
 		            %U_Z&, 
                        U_W&, U_Y \sim \text{Unif}[0, 1], \label{eq:cancer-scm-n}
\end{empheq}
where the $f_D$ mechanism is constructed by the team.

The clinical team has access to an oracle that is capable of predicting the future perfectly. In particular, the oracle tells the team how each individual would respond to surgery. That is, for each unit $U = u$ (of the 500 units), the oracle returns the values of
\begin{align}
    Y_{d_0}(u), Y_{d_1}(u).
\end{align}
Having access to this information, the clinicians quickly realize how to use their resources. In particular, they notice that for units for whom $(Y_{d_0}(u), Y_{d_1}(u))$ equals $(0, 0)$ or $(1, 1)$, there is no effect of surgery, since they will (or will not) survive regardless of the decision. They also notice that surgery is harmful for individuals for whom $(Y_{d_0}(u), Y_{d_1}(u)) = (1, 0)$. These individuals would not survive if given surgery, but would survive otherwise. Therefore, they ultimately decide to treat 100 individuals who satisfy
\begin{align}
    (Y_{d_0}(u), Y_{d_1}(u)) = (0, 1),
\end{align}
since these individuals are precisely those whose death can be prevented by surgery. They learn there are 100 males and 100 females in the $(0, 1)$-group, and thus, to be fair with respect to sex, they decide to treat 50 males and 50 females.
\end{example}
The space of units corresponding to the above example is represented in Fig.~\ref{fig:oracle-principal-fairness}. The groups described by different values of $Y_{d_0}(u), Y_{d_1}(u)$ in the example are known as \textit{canonical types} \citep{balke1994counterfactual} or \textit{principal strata} \citep{frangakis2002principal}.
Two groups cannot be influenced by the treatment decision (which will be called ``Safe'' and ``Doomed'', see Fig.~\ref{fig:oracle-principal-fairness}). The third group represents those who are harmed by treatment (called ``Harmed''). In fact, the decision to perform surgery for this subset of individuals is harmful. Finally, the last group represents exactly those for whom the surgery is life-saving, which is the main goal of the clinicians (this group is called ``Helped''). 

This example illustrates how, in presence of perfect knowledge, the team can allocate resources efficiently. In particular, the consideration of fairness comes into play when deciding which of the individuals corresponding to the $(0, 1)$ principal stratum will be treated. Since the number of males and females in this group is equal, the team decides that half of those treated should be female. The approach described above can be seen as appealing in many applications unrelated to to the medical setting, and motivated the definition of principal fairness \citep{imai2020principal}. As we show next, however, this viewpoint is often incompatible with the decision-maker's perspective.

\subsection{Decision-Maker's Perspective}
We next discuss the policy construction from the perspective of the decision-maker:
\begin{figure}
     \centering
     \begin{subfigure}[b]{0.45\textwidth}
         \centering        \includegraphics[keepaspectratio,width=0.6\columnwidth]{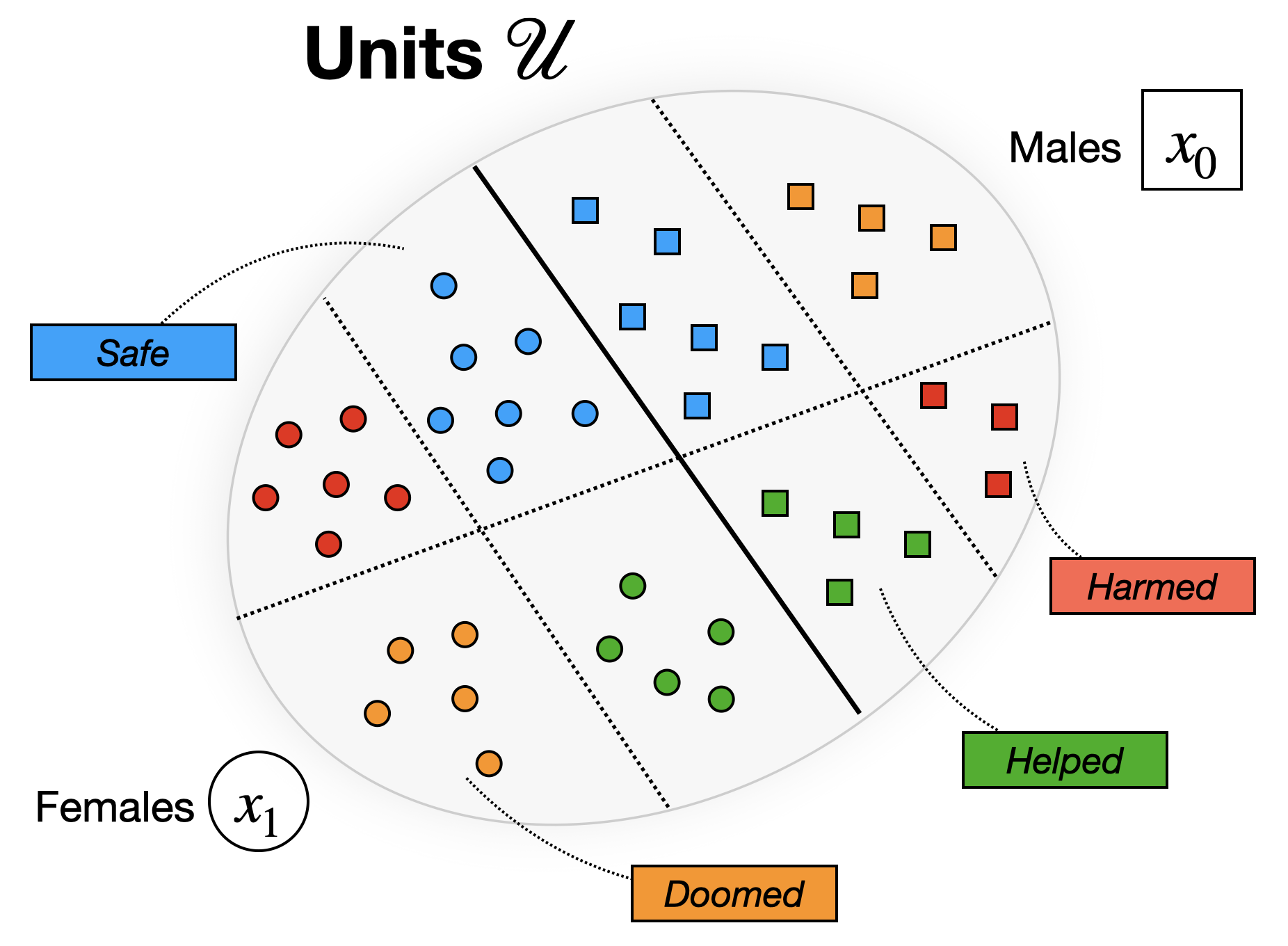}
\caption{Units in Ex.~\ref{ex:oracle-and-judge}.}\label{fig:oracle-principal-fairness}
     \end{subfigure}
     \hfill
          \begin{subfigure}[b]{0.45\textwidth}
         \centering
          \begin{tikzpicture}
	 [>=stealth, rv/.style={thick}, rvc/.style={triangle, draw, thick, minimum size=7mm}, node distance=18mm]
	 \pgfsetarrows{latex-latex};
	 \begin{scope}
	 	\node[rv] (x) at (-2,0) {$X$};
	 	\node[rv] (w) at (-1,-1.5) {$W$};
	 	\node[rv] (y) at (2,0) {$Y$};
	 	\node[rv] (d) at (1, -1.5) {$D$};

	 	\draw[->] (x) -- (d);
	 	\draw[->] (x) -- (w);
	 	\draw[->] (x) -- (y);
		
		\draw[->] (w) -- (d);
		\draw[->] (d) -- (y);
		\draw[->] (w) -- (y);
	 \end{scope}
    \end{tikzpicture}
    \caption{Causal diagram for Ex.~\ref{ex:cancer-pf}.}
    \label{fig:cancer-pf}
    \end{subfigure}
    \caption{Standard Fairness Model and its instantiation for Ex.~\ref{ex:cancer-pf}.}
\end{figure}
\addtocounter{example}{-1}
\begin{example}[Cancer Surgery - continued] \label{ex:cancer-pf}
    The team of clinicians constructs the causal diagram associated with the decision-making process, shown in Fig.~\ref{fig:cancer-pf}. Using data from their electronic health records (EHR), they estimate the benefit of the treatment based on $x, w$:
    \begin{align}
        \ex[Y_{d_1} - Y_{d_0} \mid w, x_1] &= \ex[Y_{d_1} - Y_{d_0} \mid w, x_0] = \frac{w}{3}. \label{eq:cancer-complier-x0}%\\  
         %& \ex[Y_{d_1} - Y_{d_0} \mid w, x_0] = \frac{w}{3}. \label{eq:cancer-complier-x1}
    \end{align}
    In words, at each level of illness severity $W = w$, the proportion of patients who benefit from the surgery is the same, regardless of sex. In light of this information, the clinicians decide to construct the decision policy $f_D$ such that $f_D(W) = \mathbb{1}(W > \frac{1}{2})$.
    In words, if a patient's illness severity $W$ is above $\frac{1}{2}$, the patient will receive treatment. 
    
    After implementing the policy and waiting for the 2-year follow-up period, clinicians estimate the probabilities of treatment within the stratum of those helped by surgery, and compute that
    \begin{align}
        P(d \mid y_{d_0} = 0, y_{d_1} = 1, x_0) &=  \frac{7}{8}, \\
        P(d \mid y_{d_0} = 0, y_{d_1} = 1, x_1) &= \frac{1}{2},
    \end{align}
    indicating that the allocation of the decision is not independent of sex. That is, within the group of those who are helped by surgery, males are more likely to be selected for treatment than females.
\end{example}
The decision-making policy introduced by the clinicians, somewhat counter-intuitively, does not allocate the treatment equally within the principal stratum of those who are helped, even though at each level of illness severity, the proportion of patients who benefit from the treatment is equal between the sexes. What is the issue at hand here? 

\begin{figure}
    \centering
    \includegraphics[width=0.8\linewidth]{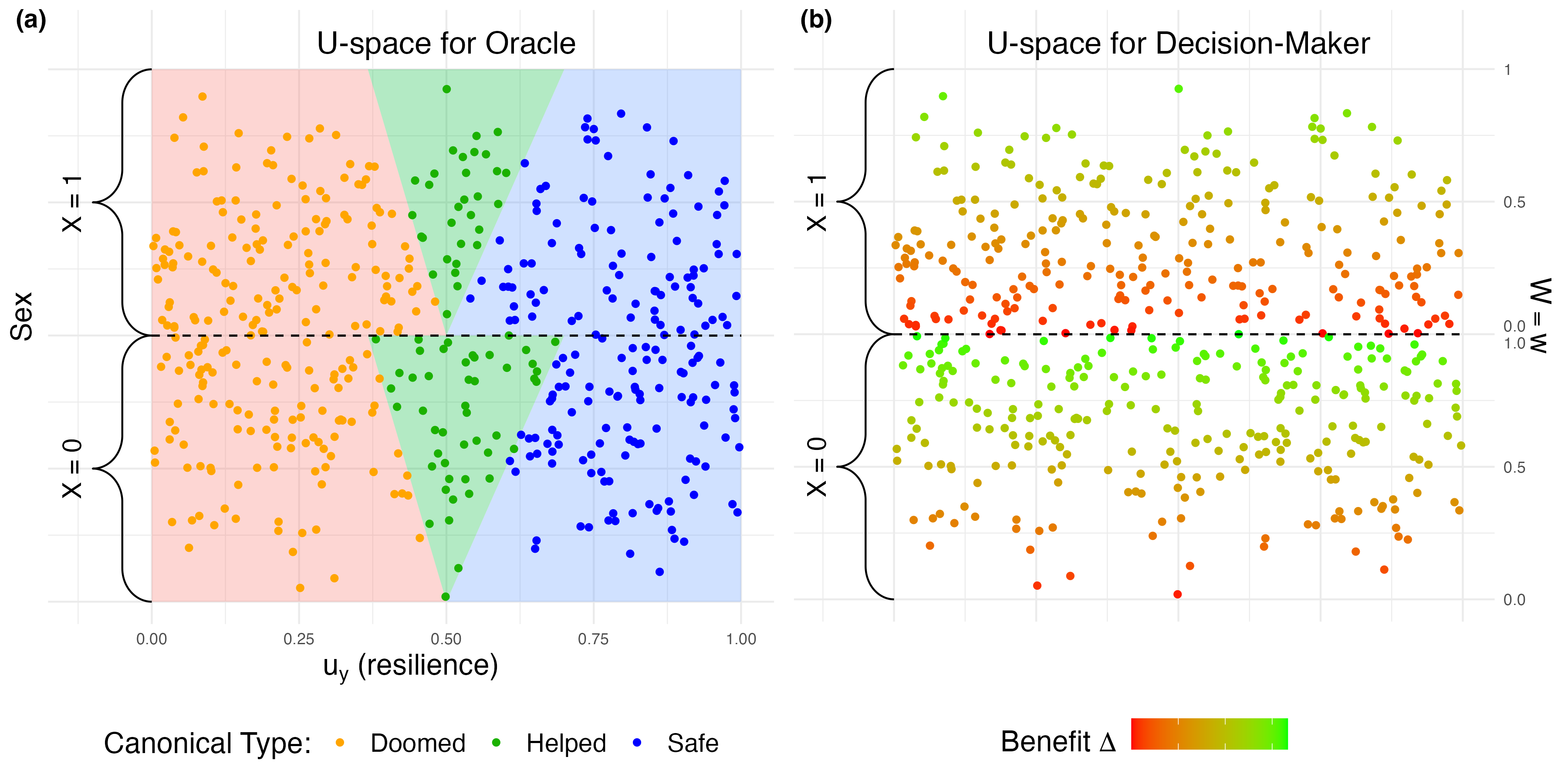}
    \caption{Difference in perspective between perfect knowledge of an oracle (left) and imperfect knowledge of a decision-maker in a practical application (right).}
    \label{fig:oracle-vs-dm}
\end{figure}
To answer this question, the perspective under perfect knowledge is shown in Fig.~\ref{fig:oracle-vs-dm}a. In particular, on the horizontal axis the noise variable $u_y$ is available, which summarizes the patients' unobserved resilience. Together with the value of illness severity (on the vertical axis) and the knowledge of the structural causal model, we can perfectly pick apart the different groups (i.e., principal strata) according to their potential outcomes (groups are indicated by color). In this case, it is clear that our policy should treat patients in the green area since those are the ones who benefit from treatment. 

In Fig.~\ref{fig:oracle-vs-dm}b, however, we see the perspective of the decision-makers under imperfect knowledge. Firstly, the decision-makers have no knowledge about the values on the horizontal axis, since this represents variables that are outside their model. From their point of view, the key quantity of interest is the conditional average treatment effect (CATE), which we call \textit{benefit} in this context, defined as
\begin{align}
    \Delta(x, z, w) = P(y_{d_1} \mid x, z, w) - P(y_{d_0} \mid x, z, w).
\end{align}
The benefit is simply the increase in survival associated with treatment. After computing the benefit (shown in Eq.~\ref{eq:cancer-complier-x0}), the decision-makers visualize the male and female groups (lighter color indicates a larger increase in survival associated with surgery) according to it. It is visible from the figure that the estimated benefit from surgery is higher for the $x_0$ group than for $x_1$. Therefore, to the best of their knowledge, the decision-makers decide to treat more patients from the $x_0$ group. 

The example illustrates why the oracle's perspective may be misleading for the decision-makers. The clinicians can never determine exactly which patients belong to the group
$$Y_{d_0}(u) = 0, Y_{d_1}(u) = 1,$$
that is, who benefits from the treatment. 
Instead, they have to rely on illness severity ($W$) as a proxy for treatment benefit ($\Delta$). In other words, our understanding of treatment benefit will always be probabilistic, and we need to account for this when considering the decision-maker's task. A further discussion on the relation to principal fairness \citep{imai2020principal} is given in Appendix~\ref{appendix:principal-fairness}.
\subsection{Benefit Fairness}
To remedy the above issue, we propose an alternative definition, which takes the viewpoint of the decision-maker:
\begin{definition}[Benefit Fairness] \label{def:benefit-fairness}
    % Let $\mathcal{M}$ be a structural causal model compatible with the SFM. Then, define the degree of benefit relative to covariates $(x, z, w)$ as:
    % \begin{align}
    %     \Delta(x, z, w) = P(y_{d_1} \mid x, z, w) - P(y_{d_0} \mid x, z, w).
    % \end{align}
    We say that the pair $(Y, D)$ satisfies the benefit fairness criterion (BFC, for short) if
    \begin{align}
        P(d \mid \Delta = \delta, x_0) = P(d \mid \Delta = \delta, x_1) \;\;\forall \delta.
    \end{align}
\end{definition}
Notice that the BFC takes the perspective of the decision-maker who only has access to the unit's attributes $(x, z, w)$, as opposed to the exogenous $U=u$\footnote{Formally, having access to the exogenous instantiation of $U = u$ implies knowing which principal stratum from Fig.~\ref{fig:oracle-principal-fairness} the unit belongs to, since $U=u$ determines all of the variations of the model.}. In particular, the benefit $\Delta(x, z, w)$ is \textit{estimable} from the data, i.e., attainable by the decision-maker. The BFC then requires that at each level of the benefit, $\Delta(x, z, w) = \delta$, the rate of the decision does not depend on the protected attribute. The benefit $\Delta$ is closely related to the canonical types discussed earlier, namely
\begin{align}
    \Delta(x,z,w) = c(x,z,w) - d(x,z,w),
\end{align}
where $c, d$ are the proportions of patients coinciding with values $x,z,w$ who are helped and harmed by the treatment, respectively. The proof of this claim is given in Appendix \ref{appendix:canonical-types}. 
Using this connection of canonical types and the notion of benefit, we can formulate a solution to the problem in Def.~\ref{def:dm-optimization}, given in Alg.~\ref{algo:optimal-pbf}. 
\begin{algorithm}[t]
    \caption{Decision-Making with Benefit Fairness}
     \begin{algorithmic}[1]
        \Statex \textbullet~\textbf{Inputs:} Distribution $P(V)$, Budget $b$
        \State Compute $\Delta(x,z,w) = \ex[Y_{d_1} - Y_{d_0}\mid x, z, w]$ for all $(x, z, w)$. 
        \State If $P(\Delta > 0) \leq b$, set $D^* = \mathbb{1}(\Delta(x,z,w) > 0)$ and \textbf{RETURN}$(D^*)$. \label{step:scarcity}
        \State Find $\delta_b > 0$ such that
        \begin{align}
            P(\Delta > \delta_b) \leq b, P(\Delta \geq \delta_b) > b.
        \end{align} \label{step:delta-budget}
        \State Otherwise, define \label{step:int-bnd}
        \begin{align}
            \mathcal{I} :=& \{ (x, z,w): \Delta(x, z, w) > \delta_b \},\\
            \mathcal{B} :=& \{ (x, z,w): \Delta(x, z, w) = \delta_b \}
        \end{align}
        \State Construct the policy $D^*$ such that:
        \begin{align}
            D^* := \begin{cases}
                1 \text{ for } (x,z,w) \in \mathcal{I}, \\
                1 \text{ with prob. } \frac{b - P(\mathcal{I})}{P(\mathcal{B})} \text{ for } (x_0,z,w) \in \mathcal{B}, \\
                1 \text{ with prob. } \frac{b - P(\mathcal{I})}{P(\mathcal{B})} \text{ for } (x_1,z,w) \in \mathcal{B}. \label{eq:d-star}
            \end{cases}
        \end{align}
     \end{algorithmic}
     \label{algo:optimal-pbf}
\end{algorithm}
In particular, Alg.~\ref{algo:optimal-pbf} takes as input the observational distribution $P(V)$, but its adaptation to inference from finite samples follows easily. In Step \ref{step:scarcity}, we check whether we are operating under resource scarcity, and if not, the optimal policy simply treats everyone who stands to benefit from the treatment. Otherwise, we find the $\delta_b > 0$ which uses the entire budget (Step~\ref{step:delta-budget}) and separate the interior $\mathcal{I}$ of those with the highest benefit (all of whom are treated), and the boundary $\mathcal{B}$ (those who are to be randomized) in Step~\ref{step:int-bnd}. The budget remaining to be spent on the boundary is $b - P(\mathcal{I})$, and thus individuals on the boundary are treated with probability $\frac{b - P(\mathcal{I})}{P(\mathcal{B})}$. Importantly, male and female groups are treated separately in this random selection process (Eq.~\ref{eq:d-star}), reflecting the BFC. The BFC can be seen as a minimal fairness requirement in decision-making, and is often aligned with maximizing utility, as seen from the following theorem:
\begin{theorem}[Alg.~\ref{algo:optimal-pbf} Optimality] \label{thm:bf-optimality}
    Among all feasible policies $D$ for the optimization problem in Eqs.~\ref{eq:dm-opt-1}-\ref{eq:dm-opt-2}, the result of Alg.~\ref{algo:optimal-pbf} is optimal and satisfies benefit fairness.
\end{theorem}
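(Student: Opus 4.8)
The plan is to turn the objective $\ex[Y_D]$ into a linear functional of the policy and then recognize the result as a fractional (continuous) knapsack that the threshold rule of Alg.~\ref{algo:optimal-pbf} solves exactly. Represent any feasible, possibly stochastic, policy by its treatment probability $\pi(x,z,w) = P(D=1\mid x,z,w) \in [0,1]$, so that the budget constraint reads $P(d) = \int \pi(x,z,w)\,dP(x,z,w) \le b$. The key reduction is the identity
\[ \ex[Y_D] = \ex[Y_{d_0}] + \int \pi(x,z,w)\,\Delta(x,z,w)\,dP(x,z,w). \]
This follows from consistency, from the fact that under a chosen policy the decision $D$ is (conditional on $x,z,w$) independent of the potential outcomes $(Y_{d_0},Y_{d_1})$ by fresh randomization, and from the identifiability of $\Delta$ in the SFM via the backdoor set $\{X,Z,W\}$. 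Since $\ex[Y_{d_0}]$ does not depend on the policy, the problem of Eqs.~\ref{eq:dm-opt-1}--\ref{eq:dm-opt-2} is equivalent to maximizing $\int \pi\,\Delta\,dP$ over $0\le\pi\le 1$ subject to $\int \pi\,dP \le b$.

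Second, I would prove optimality of the policy $\pi^*$ returned by the algorithm. In the non-scarce regime (Step~\ref{step:scarcity}), $\pi^* = \mathbb{1}(\Delta>0)$ maximizes the integrand $\pi\Delta$ pointwise over $\pi\in[0,1]$, and it is feasible because $P(\Delta>0)\le b$; hence no feasible policy can exceed it. In the scarce regime, let $\pi$ be any feasible policy and use the decomposition
\[ \int (\pi^*-\pi)\,\Delta\,dP = \int (\pi^*-\pi)(\Delta-\delta_b)\,dP + \delta_b \int (\pi^*-\pi)\,dP. \]
The second term is nonnegative because $\pi^*$ exhausts the budget, $\int\pi^*\,dP = b \ge \int\pi\,dP$, and $\delta_b>0$. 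For the first term I would check the sign of the integrand on each region: on the interior $\mathcal{I}$ we have $\Delta-\delta_b>0$ and $\pi^*=1\ge\pi$; on the exterior $\{\Delta<\delta_b\}$ we have $\Delta-\delta_b<0$ and $\pi^*=0\le\pi$; and on the boundary $\mathcal{B}$ the factor $\Delta-\delta_b$ vanishes. In all cases $(\pi^*-\pi)(\Delta-\delta_b)\ge 0$, so the whole expression is nonnegative and $\ex[Y_{D^*}]\ge\ex[Y_D]$.

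Third, benefit fairness follows immediately from the shape of $\pi^*$, which is a function of $\Delta$ alone: it equals $1$ for $\Delta>\delta_b$, equals $\frac{b-P(\mathcal{I})}{P(\mathcal{B})}$ for $\Delta=\delta_b$, and equals $0$ for $\Delta<\delta_b$, with the boundary randomization probability chosen identically for the $x_0$ and $x_1$ groups (Eq.~\ref{eq:d-star}). Therefore $P(d\mid\Delta=\delta,x_0)=P(d\mid\Delta=\delta,x_1)$ for every $\delta$, which is exactly the criterion of Def.~\ref{def:benefit-fairness}. Along the way I would record the routine well-posedness facts: the threshold $\delta_b$ exists since $\delta\mapsto P(\Delta\ge\delta)$ crosses $b$ at the appropriate top quantile (using $P(\Delta>0)>b$ and boundedness of $\Delta$), and the boundary probability lies in $[0,1)$ because $P(\mathcal{I})\le b < P(\mathcal{I})+P(\mathcal{B})$ forces $0 \le b-P(\mathcal{I}) < P(\mathcal{B})$.

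The main obstacle is not any single computation but securing the first-step linearization: the entire argument rests on $\ex[Y_D] = \ex[Y_{d_0}] + \int \pi\Delta\,dP$, which silently invokes the SFM-based ignorability $Y_d \perp D \mid X,Z,W$ and the legitimacy of handling stochastic policies through independent randomization. Once that reduction is in place, optimality is the standard Neyman--Pearson / bang-per-buck exchange argument, cleanly packaged in the $(\Delta-\delta_b)$ decomposition above.
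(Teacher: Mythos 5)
Your proof is correct and follows essentially the same route as the paper's: both reduce $\ex[Y_D]$ via consistency and the conditional ignorability $Y_{d_0}, Y_{d_1} \ci D \mid X, Z, W$ to a policy-independent constant plus the linear functional $\sum_{x,z,w}\pi(x,z,w)\Delta(x,z,w)P(x,z,w)$, and then conclude that the budgeted threshold rule of Alg.~\ref{algo:optimal-pbf} maximizes it. The only difference is one of completeness in your favor: where the paper routes the algebra through the canonical-type proportions $(s,d,c,u)$ and then asserts that the resulting linear program is solved by $D^*$, you carry out that last step explicitly via the $(\Delta-\delta_b)$ exchange argument and also verify the benefit-fairness claim directly from the form of $\pi^*$.
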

A key extension we discuss next relates to the cases in which the benefit itself may be deemed as discriminatory towards a protected group.
\section{Fairness of the Benefit}
As discussed above, benefit fairness guarantees that at each fixed level of the benefit $\Delta = \delta$, the protected attribute plays no role in the treatment assignment. However, benefit fairness does not guarantee that treatment probability is  equal between groups, i.e., that $P(d \mid x_1) = P(d \mid x_0)$.
\addtocounter{example}{-1}
\begin{example}[Cancer Surgery - continued]
    After applying benefit fairness and implementing the optimal policy $D^* = \mathbb{1} \big( W > \frac{1}{2} \big)$, the clinicians compute that $P(d \mid x_1) - P(d \mid x_0) = -50\%,$ that is, females are 50\% less likely to be treated than males.
\end{example}
In our example benefit fairness results in a disparity in resource allocation. Whenever this is the case, it implies that the benefit $\Delta$ differs between groups.  In Alg.~\ref{algo:benefit-decomposition} we describe a formal procedure that helps the decision-maker to obtain a causal understanding of why that is, i.e., which underlying causal mechanisms (direct, indirect, spurious) lead to the difference in the benefit.
\begin{algorithm}[t]
    \caption{Benefit Fairness Causal Explanation}
     \begin{algorithmic}[1]
        \Statex \textbullet~\textbf{Inputs:} Distribution $P(V)$, Benefit $\Delta(x,z,w)$, Decision policy $D$
        \State Compute the causal decomposition of the resource allocation disparity into its direct, indirect, and spurious contributions \citep{zhang2018fairness, plecko2022causal}
        \begin{align}
            P(d \mid x_1) - P(d \mid x_0) = \text{DE} + \text{IE} + \text{SE}.
        \end{align}
    \State Compare the distributions $P(\Delta \mid x_1)$ and $P(\Delta \mid x_0)$.
    \State Compute the causal decomposition of the benefit disparity
    \begin{align}
        \ex(\Delta \mid x_1) - \ex(\Delta \mid x_0) = \text{DE} + \text{IE} + \text{SE}.
    \end{align}
    \State Compute the counterfactual distribution $P(\Delta_C \mid x)$ for specific interventions $C$ that remove the direct, indirect, or total effect of X on the benefit $\Delta$.\label{bullet:toolkitE}
     \end{algorithmic}
    \label{algo:benefit-decomposition}
\end{algorithm}
We ground the idea behind Alg.~\ref{algo:benefit-decomposition} in our example:
\addtocounter{example}{-1}
\begin{example}[Decomposing the disparity] \label{ex:cancer-surgery-toolkit-I}
    Following Alg.~\ref{algo:benefit-decomposition}, the clinicians first decompose the observed disparities into their direct, indirect, and spurious components:
\begin{align}
    P(d \mid x_1) - P(d \mid x_0) &= \underbrace{0\%}_{\text{DE}} + \underbrace{-50\%}_{\text{IE}} + \underbrace{0\%}_{\text{SE}}, \\
    \ex(\Delta \mid x_1) - \ex(\Delta \mid x_0) &= \underbrace{0}_{\text{DE}} + \underbrace{-\frac{1}{9}}_{\text{IE}} + \underbrace{0}_{\text{SE}},
\end{align}
showing that the difference between groups is entirely explained by the levels of illness severity, that is, male patients are on average more severely ill than female patients (see Fig.~\ref{fig:bf-oc-toolkitC}). Direct and spurious effects, in this example, do not explain the difference in benefit between the groups. 
%They also inspect the distribution of the benefit $\Delta$ for the male and female groups, shown in Fig.~\ref{fig:bf-oc-toolkitB}. The figure shows that, the distribution of the benefit between male and female groups is different, explaining why benefit fairness criterion did allocate resources equally between groups.

Based on these findings, the clinicians realize that the main driver of the disparity in the benefit $\Delta$ is the indirect effect. Thus, they decide to compute the distribution of the benefit $P(\Delta_{x_1, W_{x_0}} \mid x_1)$, which corresponds to the distribution of the benefit had $X$ been equal to $x_0$ along the indirect effect. The comparison of this distribution, with the distribution $P(\Delta \mid x_0)$ is shown in Fig.~\ref{fig:bf-oc-toolkitD}, indicating that the two distributions are in fact equal.
\end{example}
\begin{figure}
     \centering
     \begin{subfigure}[b]{0.45\textwidth}
         \centering          \includegraphics[keepaspectratio,width=0.9\columnwidth]{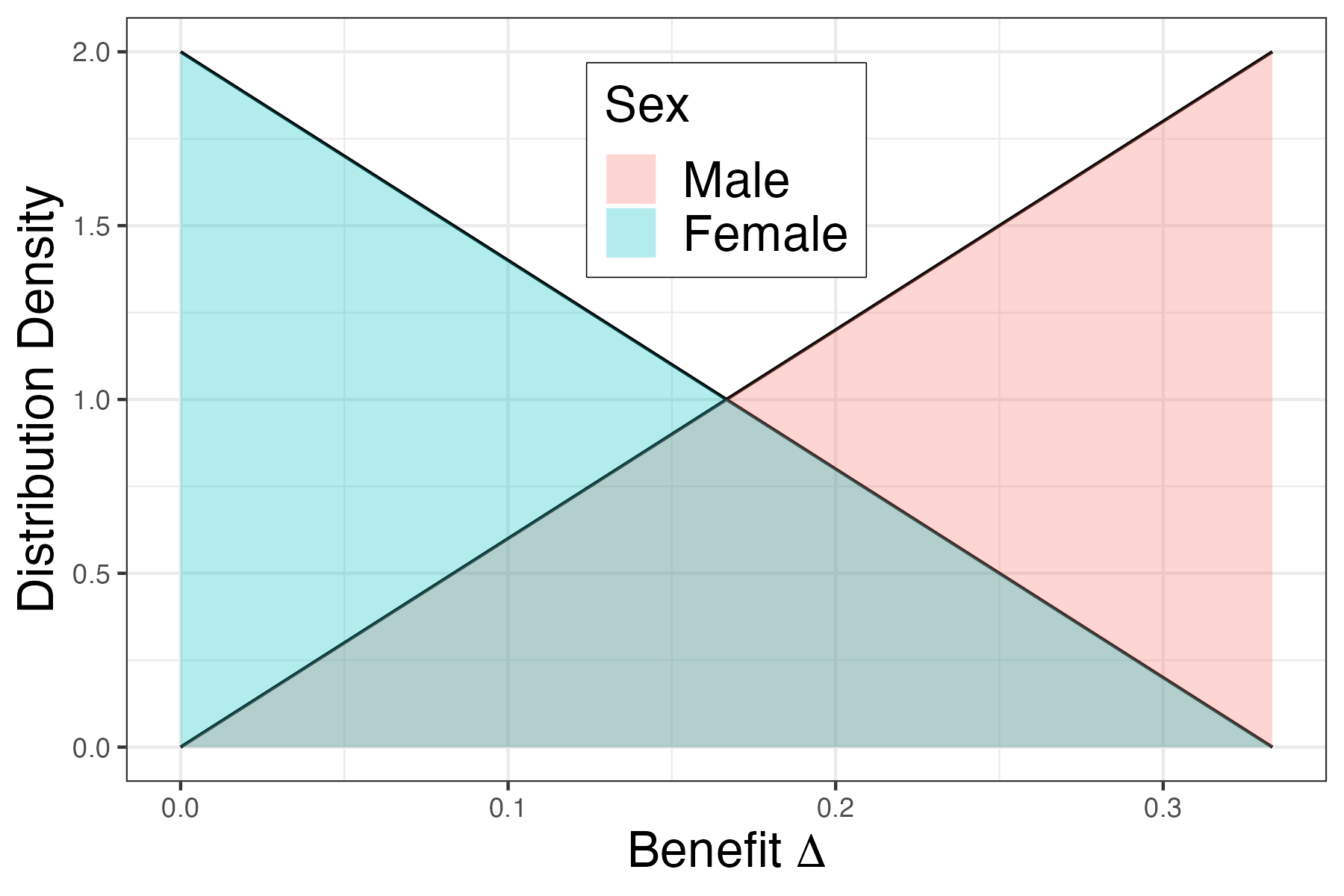}
          \caption{Density of benefit $\Delta$ by group.}
          \label{fig:bf-oc-toolkitC}
     \end{subfigure}
     \hfill
          \begin{subfigure}[b]{0.45\textwidth}
         \centering
          \includegraphics[keepaspectratio,width=0.9\columnwidth]{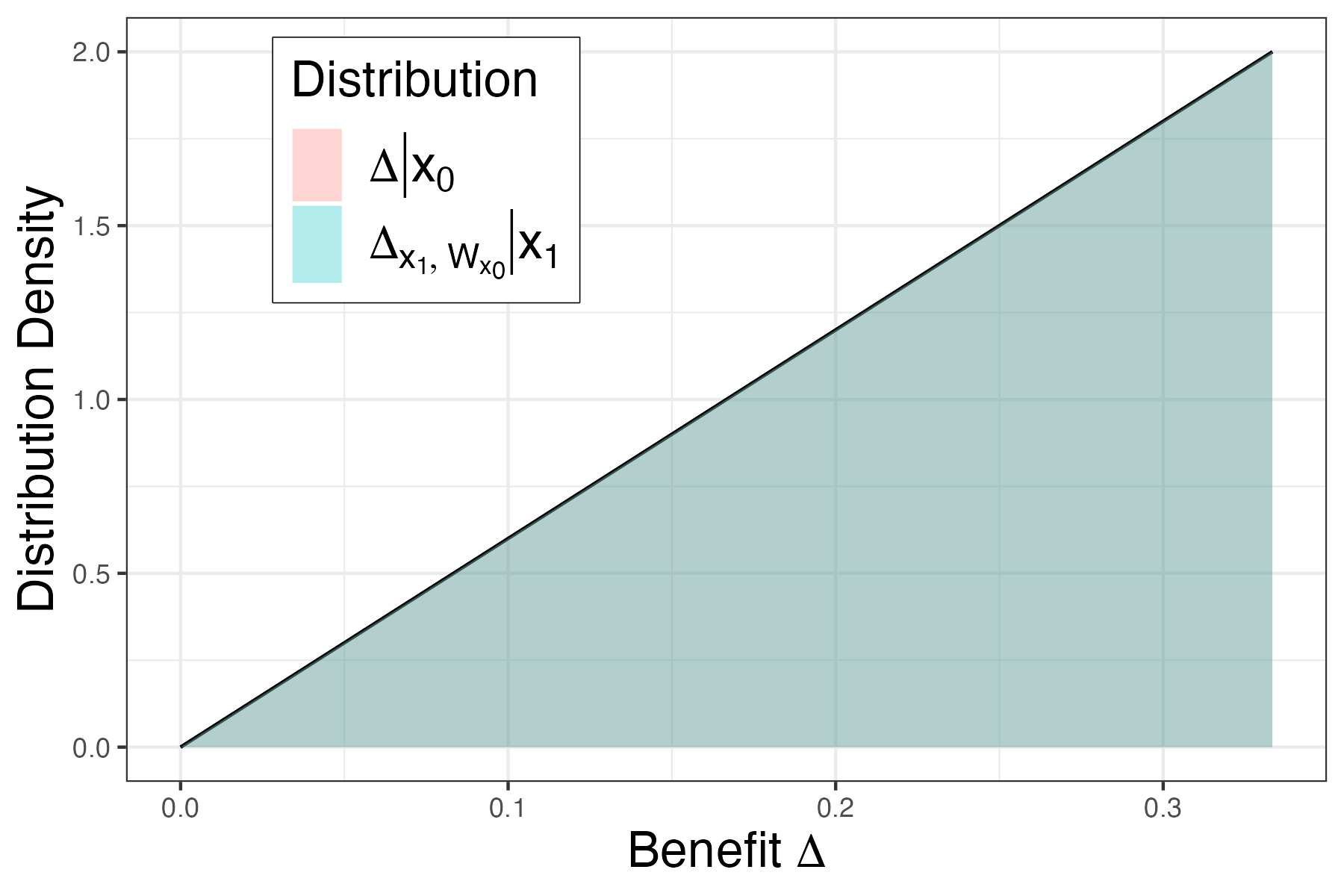}
          \caption{Density of counterfactual benefit $\Delta_C$.}
          \label{fig:bf-oc-toolkitD}
     \end{subfigure}
     \caption{Elements of analytical tools from Alg.~\ref{algo:benefit-decomposition} in Ex.~\ref{ex:cancer-surgery-toolkit-I}.}
     \label{fig:toolkit-part-II}
\end{figure}
In the above example, the difference between groups is driven by the indirect effect, although generally, the situation may be more complex, with a combination of effects driving the disparity. Still, the tools of Alg.~\ref{algo:benefit-decomposition} equip the reader for analyzing such more complex cases. The key takeaway here is that the first step in analyzing a disparity in treatment allocation is to obtain a \textit{causal understanding} of why the benefit differs between groups. Based on this understanding, the decision-maker may decide that the benefit $\Delta$ is unfair, which is what we discuss next.

\subsection{Controlling the Gap}
\paragraph{A causal approach.} The first approach for controlling the gap in resource allocation takes a counterfactual perspective. We first define what it means for the benefit $\Delta$ to be causally fair:
\begin{definition}[Causal Benefit Fairness] \label{def:causal-BF}
Suppose $\mathcal{C} = (C_0, C_1)$ describes a pathway from $X$ to $Y$ which is deemed unfair. The pair $(Y, D)$ satisfies counterfactual benefit fairness (CBF) if
\begin{align}
    \ex(y_{C_1, d_1} - y_{C_1, d_0} \mid x, z, w) &= \ex(y_{C_0, d_1} - Y_{C_0, d_0} \mid x, z, w) \;\forall x, z, w\\
    % \ex(y_{C_1, d_0} \mid x, z, w) &= \ex(y_{C_0, d_0} \mid x, z, w) \;\forall x, z, w\\
    P(d \mid \Delta, x_0) &= P(d \mid \Delta, x_1).
\end{align}
\end{definition}
To account for discrimination along a specific causal pathway (after using Alg.~\ref{algo:benefit-decomposition}), the decision-maker needs to compute an adjusted version of the benefit $\Delta$, such that the protected attribute has no effect along the intended causal pathway $\mathcal{C}$. For instance, $\mathcal{C} = (\{x_0\}, \{ x_1 \})$ describes the total causal effect, whereas $\mathcal{C} = (\{x_0\}, \{ x_1, W_{x_0} \})$ describes the direct effect.
In words, CBF requires that treatment benefit $\Delta$ should not depend on the effect of $X$ on $Y$ along the causal pathway $\mathcal{C}$. Additionally, the decision policy $D$ should satisfy BFC, meaning that at each degree of benefit $\Delta = \delta$, the protected attribute plays no role in deciding whether the individual is treated or not. This can be achieved using Alg.~\ref{algo:utilitarian-pbf}. 
In Step~\ref{step:compute-deltas}, the factual benefit values $\Delta$, together with the adjusted, counterfactual benefit values $\Delta_C$ (that satisfy Def.~\ref{def:causal-BF}) are computed. Then, $\delta_{CF}$ is chosen to match the budget $b$, and all patients with a counterfactual benefit above $\delta_{CF}$ are treated\footnote{In this section, for clarity of exposition we assume that distribution of the benefit admits a density, although the methods are easily adapted to the case when this does not hold.}, as demonstrated in the following example:
\addtocounter{example}{-1}
\begin{example}[Cancer Surgery - Counterfactual Approach] \label{ex:startup-cpbf}
    The clinicians realize that the difference in illness severity comes from the fact that female patients are subject to regular screening tests, and are therefore diagnosed earlier. The clinicians want to compute the adjusted benefit, by computing the counterfactual values of the benefit $\Delta_{x_1, W_{x_0}}(u)$ for all $u$ such that $X(u) = x_1$. For the computation, they assume that the relative order of the illness severity for females in the counterfactual world would have stayed the same (which holds true in the underlying SCM). Therefore, they compute that
    \begin{align}
        \Delta_{x_1, W_{x_0}}(u) = \frac{1}{3} \sqrt{1 - (1 - W(u))^2}, 
    \end{align}
    for each unit $u$ with $X(u) = x_1$. After applying Alg.~\ref{algo:optimal-pbf} with the counterfactual benefit values $\Delta_C$, the resulting policy $D^{CF} = \mathbb{1}(\Delta_C > \frac{1}{4})$ has a resource allocation disparity of $0$.
\end{example}
The above example illustrates the core of the causal counterfactual approach to discrimination removal. The BFC was not appropriate in itself, since the clinicians are aware that the benefit of the treatment depends on sex in a way they deemed unfair. Therefore, to solve the problem, they first remove the undesired effect from the benefit $\Delta$, by computing the counterfactual benefit $\Delta_C$. After this, they apply Alg.~\ref{algo:utilitarian-pbf} with the counterfactual method (CF) to construct a fair decision policy. 
%Notably, the causal approach to reducing the disparity relies on the counterfactual values of the benefit $\Delta_C(u)$, as opposed to the factual benefit values $\Delta(u)$.

% \begin{remark}[Direct Effect on Benefit is Computable] \label{remark:direct-benefit}
%     Under the assumptions of the SFM, the potential outcome $\Delta_{x_1, W_{x_0}}(u)$ is identifiable for any unit $u$ with $X(u) = x_0$ for which the attributes $Z(u) = z, W(u) = w$ are observed. However, the same is not true for the indirect effect. While the counterfactual distribution of the benefit when the indirect effect is manipulated, written $P(\Delta_{x_0, W_{x_1}} \mid x_0)$, is identifiable, the covariate-level values are not identifiable.
% \end{remark}

\paragraph{A utilitarian/factual approach.}
An alternative, utilitarian (or factual) approach to reduce the disparity in resource allocation uses the factual benefit $\Delta(u)$, instead of the counterfactual benefit $\Delta_C(u)$ used in the causal approach. 
This approach is also described in Alg.~\ref{algo:utilitarian-pbf}, with the utilitarian (UT) method. Firstly, in Step~\ref{step:find-M-cf}, the counterfactual values $\Delta_C$ are used to compute the disparity that would arise from the optimal policy in the hypothetical, counterfactual world:
\begin{align}
    M := | P(\Delta_C \geq \delta_{CF} \mid x_1) - P(\Delta_C \geq \delta_{CF} \mid x_0)  |. \label{eq:causal-max-disparity}
\end{align}
The idea then is to introduce different thresholds $\delta^{(x_0)}, \delta^{(x_1)}$ for $x_0$ and $x_1$ groups, such that they introduce a disparity of at most $M$. In Step \ref{step:is-M-bounded} we check whether the optimal policy introduces a disparity bounded by $M$. If the disparity is larger than $M$ by an $\epsilon$, in Step \ref{step:find-slack} we determine how much slack the disadvantaged group requires, by finding thresholds $\delta^{(x_0)}, \delta^{(x_1)}$ that either treat everyone in the disadvantaged group, or achieve a disparity bounded by $M$. The counterfactual (CF) approach focused on the counterfactual benefit values $\Delta_C$ and used a single threshold. The utilitarian (UT) approach focuses on the factual benefit values $\Delta$, but uses different thresholds within groups. However, the utilitarian approach uses the counterfactual values $\Delta_C$ to determine the maximum allowed disparity. Alternatively, this disparity can be pre-specified, as shown in the following example:
\begin{algorithm}[t]
    \caption{Causal Discrimination Removal for Outcome Control}
     \begin{algorithmic}[1]
        \Statex \textbullet~\textbf{Inputs:} Distribution $P(V)$, Budget $b$, Intervention $C$, Max. Disparity $M$, Method $\in \{CF, UT\}$
        \State Compute $\Delta(x,z,w), \Delta_C(x,z,w)$ for all $(x, z, w)$. \label{step:compute-deltas}
        \State If $P(\Delta > 0) \leq b$, set $D = \mathbb{1}(\Delta(x,z,w) > 0)$ and \textbf{RETURN}$(D)$. \label{step:is-scarce}
        \State Find $\delta_{CF} > 0$ such that
        \begin{align}
        P(\Delta_C \geq \delta_{CF}) = b.
        \end{align}
        \State If Method is CF, set $D^{CF} = \mathbb{1}(\Delta_C(x,z,w) \geq \delta_{CF})$ and \textbf{RETURN}$(D^{CF})$. 
        \State If $M$ not pre-specified, compute the disparity
        \begin{align}
            M := P(\Delta_C \geq \delta_{CF} \mid x_1) - P(\Delta_C \geq \delta_{CF} \mid x_0).
        \end{align} \label{step:find-M-cf}
        \State Find $\delta_{UT}$ such that $P(\Delta \geq \delta_{UT}) = b$. If $$|P(\Delta \geq \delta_{UT} \mid x_1) - P(\Delta \geq \delta_{UT} \mid x_0 )| \leq M,$$ set $D^{UT} = \mathbb{1}(\Delta(x,z,w) \geq \delta_{UT})$ and \textbf{RETURN}$(D^{UT})$. \label{step:is-M-bounded}

        \State Otherwise, suppose w.l.o.g. that $P(\Delta \geq \delta_b \mid x_1) - P(\Delta \geq \delta_b \mid x_0) = M + \epsilon$ for $\epsilon > 0$. Define $l := \frac{P(x_1)}{P(x_0)}$, and let $\delta^{(x_0)}_{lb}$ be such that $$P(\Delta \geq \delta^{(x_0)}_{lb}  \mid x_0) = P(\Delta \geq \delta_{b} \mid x_0) + \epsilon \frac{l}{1+l}.$$
        Set $\delta^{(x_0)} = \max(\delta^{(x_0)}_{lb}, 0),$ and $\delta^{(x_1)}$ s.t. $P(\Delta \geq \delta^{(x_1)} \mid x_1) = \frac{b}{P(x_1)} - \frac{1}{l}P(\Delta \geq \delta^{(x_0)}\mid x_0)$. \label{step:find-slack}
        \State Construct and \textbf{RETURN} the policy $D^{UT}$:
        \begin{align}
            D^{UT} := \begin{cases}
                1 \text{ for } (x_1,z,w) \text{ s.t. } \Delta(x_1, z, w) \geq \delta^{(x_1)}, \\
                1 \text{ for } (x_0,z,w) \text{ s.t. } \Delta(x_0, z, w) \geq \delta^{(x_0)}, \\
                0 \text{ otherwise}.
            \end{cases}
        \end{align}
     \end{algorithmic}
     \label{algo:utilitarian-pbf}
\end{algorithm}
\addtocounter{example}{-1}
\begin{example}[Cancer Surgery - Utilitarian Approach]
    Due to regulatory purposes, clinicians decide that $M = 20\%$ is the maximum allowed disparity that can be introduced by the new policy $D$. Using Alg.~\ref{algo:utilitarian-pbf}, they construct $D^{UT}$ and find that for $\delta^{(x_0)} = 0.21, \delta^{(x_1)} = 0.12$,
    \begin{align}
        P(\Delta > \delta^{(x_0)} \mid x_0) &\approx 60\%, P(\Delta > \delta^{(x_1)} \mid x_1) \approx 40\%,
    \end{align}
    % which corresponds to the policy:
    % \begin{align}
    %      D^{UT} := \begin{cases}
    %             1 \text{ for } (x_0, w): \Delta(x_0, w) > \delta^{(x_0)}, \\
    %             1 \text{ for } (x_1, w): \Delta(x_1, w) > \delta^{(x_1)}, \\
    %             0 \text{ otherwise},
    %     \end{cases}
    % \end{align}
    which yields $P(d^{UT}) \approx 50\%$, and $P(d^{UT} \mid x_1) - P(d^{UT} \mid x_0) \approx 20\%$, which is in line with the hospital resources and the maximum disparity allowed by the regulators.
\end{example}
Finally, we describe the theoretical guarantees for the methods in Alg.~\ref{algo:utilitarian-pbf} (proof given in Appendix~\ref{appendix:ubf-optimality}):
\begin{theorem}[Alg.~\ref{algo:utilitarian-pbf} Guarantees] \label{thm:cbf-optimality}
The policy $D^{CF}$ is optimal among all policies with a budget $\leq b$ that in the counterfactual world described by intervention $C$. The policy $D^{UT}$ is optimal among all policies with a budget $\leq b$ that either introduce a bounded disparity in resource allocation $|P(d \mid x_1) - P(d \mid x_0)| \leq M$ or treat everyone with a positive benefit in the disadvantaged group.
\end{theorem}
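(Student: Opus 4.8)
The plan is to treat the two policies separately. For each I would first rewrite the expected‑outcome objective as a linear functional of the treatment indicator weighted by the relevant benefit, and then solve the resulting budget‑constrained allocation problem by reusing the water‑filling logic behind Theorem~\ref{thm:bf-optimality}. Throughout I rely on the density assumption (footnote) so that threshold ties carry zero mass and no boundary randomization is needed.

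\textbf{Optimality of $D^{CF}$.} For any policy $D$ that is a (possibly stochastic) function of $(x,z,w)$, consistency together with iterated expectations over the exogenous noise gives $\ex[Y_D] = \ex[Y_{d_0}] + \ex[D\,\Delta]$, using $\ex[D(Y_{d_1}-Y_{d_0}) \mid x,z,w] = D\,\Delta(x,z,w)$ and the conditional independence of the randomization from the potential outcomes given $(x,z,w)$. Performing the same decomposition in the counterfactual world induced by the intervention $C$ replaces $\Delta$ by $\Delta_C$, so maximizing the counterfactual outcome subject to $P(d)\le b$ is equivalent to $\argmax_D \ex[D\,\Delta_C]$ subject to $P(D=1)\le b$. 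Since $\ex[Y_{d_0}]$ and its counterfactual analogue are constants independent of $D$, this is a fractional‑knapsack problem whose optimum treats units in decreasing order of $\Delta_C$. Because $\delta_{CF}$ is chosen so that $P(\Delta_C \ge \delta_{CF}) = b$, the threshold rule $\mathbb{1}(\Delta_C \ge \delta_{CF}) = D^{CF}$ exhausts the budget exactly and is therefore optimal, with the CBF conditions holding by construction of $\Delta_C$.

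\textbf{Optimality of $D^{UT}$.} The key reduction is that, for any target pair of group‑wise treatment rates $p_0 = P(D=1\mid x_0)$ and $p_1 = P(D=1\mid x_1)$, the allocation that maximizes $\ex[D\,\Delta]$ while realizing those rates treats, within each group, the units of highest factual benefit $\Delta$; hence without loss of optimality the candidate is a pair of group thresholds $(\delta^{(x_0)}, \delta^{(x_1)})$. Writing $G_x(p) := \ex[\Delta\,\mathbb{1}(\text{top-}p\text{ fraction of group }x) \mid x]$, which is concave and nondecreasing in $p$ with marginal benefit $\delta_x(p)$, the problem becomes $\max_{p_0,p_1} \sum_x P(x) G_x(p_x)$ over the convex feasible region $\{P(x_0)p_0 + P(x_1)p_1 \le b,\ |p_1 - p_0|\le M\}$. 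This is a concave maximization over a convex set, so the KKT conditions are necessary and sufficient. If the single‑threshold solution already satisfies $|p_1-p_0|\le M$, the disparity constraint is inactive, stationarity equalizes marginal benefits across groups, and the algorithm returns exactly this policy in Step~\ref{step:is-M-bounded}. Otherwise the optimum lies on the active boundary $|p_1-p_0| = M$ (or at a corner), and I would verify that the thresholds built in Step~\ref{step:find-slack} reallocate budget from the over‑treated to the under‑treated group so as to hit disparity exactly $M$ while exhausting the budget — a short computation with $l = P(x_1)/P(x_0)$ shows the increments $\epsilon\frac{l}{1+l}$ and $\epsilon\frac{1}{1+l}$ cancel in the budget and subtract exactly $\epsilon$ from the disparity. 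The truncation $\delta^{(x_0)} = \max(\delta^{(x_0)}_{lb}, 0)$ activates the benefit‑nonnegativity constraint and corresponds precisely to the alternative feasibility branch ``treat everyone with positive benefit in the disadvantaged group''.

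\textbf{Main obstacle.} The routine parts are the within‑group water‑filling and the arithmetic confirming that the stated thresholds meet budget and disparity $M$ exactly. The delicate part is the binding case: one must argue that the constrained optimum indeed sits on the disparity boundary (or corner) rather than in the interior — i.e., that shifting allocation toward parity raises the under‑treated group's contribution faster than it lowers the over‑treated group's until the constraint binds — and then handle the corner where the non‑negativity constraint on $\Delta$ becomes active simultaneously with the disparity constraint. Showing that these two active‑set configurations exhaust the KKT stationary points, and that the algorithm always selects the correct one, is where the real care is required.
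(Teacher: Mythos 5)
Your treatment of $D^{CF}$ matches the paper's (which simply re-invokes the Theorem~\ref{thm:bf-optimality} argument with $\Delta_C$ in place of $\Delta$), and your reduction of $D^{UT}$ to group-wise threshold policies is also the paper's first move. However, your main argument for $D^{UT}$ has a structural gap: you cast the problem as a concave maximization over the \emph{convex} region $\{P(x_0)p_0 + P(x_1)p_1 \le b,\ |p_1 - p_0| \le M\}$ and appeal to KKT being necessary and sufficient. But the feasible class in the theorem is a \emph{union} of two sets --- policies with disparity at most $M$, \emph{or} policies that treat everyone with positive benefit in the disadvantaged group (the latter may have disparity exceeding $M$) --- and this union is not convex, so the KKT machinery does not certify optimality against competitors from the second branch. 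Treating the second branch as merely ``the corner where the non-negativity constraint becomes active'' conflates a boundary of your convex program with a genuinely separate disjunct of the feasible set; a competitor with $\delta^{(x_0)\prime} = 0$ and arbitrarily large disparity is feasible for the theorem but infeasible for your program. This is precisely what the paper's proof handles by a direct exchange argument: any threshold competitor falls into one of two cases, $(\delta^{(x_1)\prime} < \delta^{(x_1)},\ \delta^{(x_0)\prime} > \delta^{(x_0)})$, which is shown to violate \emph{both} disjuncts simultaneously (disparity strictly above $M$ \emph{and} not fully treating the disadvantaged group, since $\delta^{(x_0)\prime} > 0$), or $(\delta^{(x_1)\prime} > \delta^{(x_1)},\ \delta^{(x_0)\prime} < \delta^{(x_0)})$, which covers all remaining competitors --- including every policy in the second disjunct --- and is dispatched by bounding
\begin{align}
U(\delta^{(x_0)}, \delta^{(x_1)}) - U(\delta^{(x_0)\prime}, \delta^{(x_1)\prime}) \;\ge\; \delta^{(x_0)}\bigl(P(\Delta \in [\delta^{(x_1)}, \delta^{(x_1)\prime}], x_1) - P(\Delta \in [\delta^{(x_0)\prime}, \delta^{(x_0)}], x_0)\bigr) \;\ge\; 0,
\end{align}
the last inequality following from the budget comparison.

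Conversely, the step you flag as the ``main obstacle'' --- that the constrained optimum sits on the disparity boundary when the single-threshold optimum violates it --- is the routine part (standard for a concave objective once the relaxed optimum is infeasible), so your effort is aimed at the wrong place. To complete your route you would need to either (i) solve two separate programs, one per disjunct, and show the algorithm's output dominates both optima, or (ii) abandon the convex-program framing for the second disjunct and run an exchange argument of the paper's Case~B type. As written, the proposal does not establish optimality over the full feasible class stated in the theorem.
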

We remark that policies $D^{CF}$ and $D^{UT}$ do not necessarily treat the same individuals in general. In Appendix~\ref{appendix:ctf-crossing}, we discuss a formal condition called \textit{counterfactual crossing} that ensures that $D^{CF}$ and $D^{UT}$ treat the same individuals, therefore explaining when the causal and utilitarian approaches are equivalent \citep{nilforoshan2022causal}.
In Appendix~\ref{appendix:experiments} we provide an additional application of our outcome control framework to the problem of allocating respirators \citep{biddison2019too} in intensive care units (ICUs), using the MIMIC-IV dataset \citep{johnson2023mimic}.
\section{Conclusion}
In this paper we developed causal tools for understanding fairness in the task of outcome control. We introduced the notion of benefit fairness (Def.~\ref{def:benefit-fairness}), and developed a procedure for achieving it (Alg.~\ref{algo:optimal-pbf}). 
Further, we develop a procedure for determining which causal mechanisms (direct, indirect, spurious) explain the difference in benefit between groups (Alg.~\ref{algo:benefit-decomposition}). 
Finally, we developed two approaches that allow the removal of discrimination from the decision process along undesired causal pathways (Alg.~\ref{algo:utilitarian-pbf}). 
The proposed framework was demonstrated through a hypothetical cancer surgery example (see \href{https://anonymous.4open.science/api/repo/outcome-control-854C/file/vignette.html}{vignette}) and a real-world respirator allocation example (Appendix~\ref{appendix:experiments}). 
We leave for future work the extensions of the methods to the setting of continuous decisions $D$, and the setting of performing decision-making under uncertainty or imperfect causal knowledge.  

\bibliography{refs}

\begin{thebibliography}{41}
\providecommand{\natexlab}[1]{#1}
\providecommand{\url}[1]{\texttt{#1}}
\expandafter\ifx\csname urlstyle\endcsname\relax
  \providecommand{\doi}[1]{doi: #1}\else
  \providecommand{\doi}{doi: \begingroup \urlstyle{rm}\Url}\fi

\bibitem[Angwin et~al.(2016)Angwin, Larson, Mattu, and Kirchner]{ProPublica}
J.~Angwin, J.~Larson, S.~Mattu, and L.~Kirchner.
\newblock Machine bias: There’s software used across the country to predict
  future criminals. and it’s biased against blacks.
\newblock \emph{ProPublica}, 5 2016.
\newblock URL
  \url{https://www.propublica.org/article/machine-bias-risk-assessments-in-criminal-sentencing}.

\bibitem[Athey and Wager(2021)]{athey2021policy}
S.~Athey and S.~Wager.
\newblock Policy learning with observational data.
\newblock \emph{Econometrica}, 89\penalty0 (1):\penalty0 133--161, 2021.

\bibitem[Balke and Pearl(1994)]{balke1994counterfactual}
A.~Balke and J.~Pearl.
\newblock Counterfactual probabilities: Computational methods, bounds and
  applications.
\newblock In \emph{Uncertainty Proceedings 1994}, pages 46--54. Elsevier, 1994.

\bibitem[Bennett et~al.(2021)Bennett, Ple{\v{c}}ko, Ukor, Meinshausen, and
  B{\"u}hlmann]{bennett2021ricu}
N.~Bennett, D.~Ple{\v{c}}ko, I.-F. Ukor, N.~Meinshausen, and P.~B{\"u}hlmann.
\newblock ricu: R's interface to intensive care data.
\newblock \emph{arXiv preprint arXiv:2108.00796}, 2021.

\bibitem[Biddison et~al.(2019)Biddison, Faden, Gwon, Mareiniss, Regenberg,
  Schoch-Spana, Schwartz, and Toner]{biddison2019too}
E.~L.~D. Biddison, R.~Faden, H.~S. Gwon, D.~P. Mareiniss, A.~C. Regenberg,
  M.~Schoch-Spana, J.~Schwartz, and E.~S. Toner.
\newblock Too many patients… a framework to guide statewide allocation of
  scarce mechanical ventilation during disasters.
\newblock \emph{Chest}, 155\penalty0 (4):\penalty0 848--854, 2019.

\bibitem[Brennan et~al.(2009)Brennan, Dieterich, and
  Ehret]{brennan2009evaluating}
T.~Brennan, W.~Dieterich, and B.~Ehret.
\newblock Evaluating the predictive validity of the compas risk and needs
  assessment system.
\newblock \emph{Criminal Justice and Behavior}, 36\penalty0 (1):\penalty0
  21--40, 2009.

\bibitem[Buolamwini and Gebru(2018)]{pmlr-v81-buolamwini18a}
J.~Buolamwini and T.~Gebru.
\newblock Gender shades: Intersectional accuracy disparities in commercial
  gender classification.
\newblock In S.~A. Friedler and C.~Wilson, editors, \emph{Proceedings of the
  1st Conference on Fairness, Accountability and Transparency}, volume~81 of
  \emph{Proceedings of Machine Learning Research}, pages 77--91, NY, USA, 2018.

\bibitem[Chiappa(2019)]{chiappa2019path}
S.~Chiappa.
\newblock Path-specific counterfactual fairness.
\newblock In \emph{Proceedings of the AAAI Conference on Artificial
  Intelligence}, volume~33, pages 7801--7808, 2019.

\bibitem[Coston et~al.(2020)Coston, Mishler, Kennedy, and
  Chouldechova]{coston2020counterfactual}
A.~Coston, A.~Mishler, E.~H. Kennedy, and A.~Chouldechova.
\newblock Counterfactual risk assessments, evaluation, and fairness.
\newblock In \emph{Proceedings of the 2020 Conference on Fairness,
  Accountability, and Transparency}, pages 582--593, 2020.

\bibitem[Datta et~al.(2015)Datta, Tschantz, and Datta]{Datta15}
A.~Datta, M.~C. Tschantz, and A.~Datta.
\newblock Automated experiments on ad privacy settings: A tale of opacity,
  choice, and discrimination.
\newblock \emph{Proceedings on Privacy Enhancing Technologies}, 2015\penalty0
  (1):\penalty0 92--112, Apr. 2015.
\newblock \doi{10.1515/popets-2015-0007}.

\bibitem[Dud\'{\i}k et~al.(2011)Dud\'{\i}k, Langford, and Li]{dudik2011doubly}
M.~Dud\'{\i}k, J.~Langford, and L.~Li.
\newblock Doubly robust policy evaluation and learning.
\newblock In \emph{Proceedings of the 28th International Conference on
  International Conference on Machine Learning}, ICML'11, page 1097–1104,
  Madison, WI, USA, 2011. Omnipress.
\newblock ISBN 9781450306195.

\bibitem[Frangakis and Rubin(2002)]{frangakis2002principal}
C.~E. Frangakis and D.~B. Rubin.
\newblock Principal stratification in causal inference.
\newblock \emph{Biometrics}, 58\penalty0 (1):\penalty0 21--29, 2002.

\bibitem[Hamburg and Collins(2010)]{hamburg2010path}
M.~A. Hamburg and F.~S. Collins.
\newblock The path to personalized medicine.
\newblock \emph{New England Journal of Medicine}, 363\penalty0 (4):\penalty0
  301--304, 2010.

\bibitem[Imai and Jiang(2020)]{imai2020principal}
K.~Imai and Z.~Jiang.
\newblock Principal fairness for human and algorithmic decision-making.
\newblock \emph{arXiv preprint arXiv:2005.10400}, 2020.

\bibitem[Insel(2009)]{insel2009translating}
T.~R. Insel.
\newblock Translating scientific opportunity into public health impact: a
  strategic plan for research on mental illness.
\newblock \emph{Archives of general psychiatry}, 66\penalty0 (2):\penalty0
  128--133, 2009.

\bibitem[Johnson et~al.(2020)Johnson, Bulgarelli, Pollard, Horng, Celi, and
  Mark]{johnson2020mimic}
A.~Johnson, L.~Bulgarelli, T.~Pollard, S.~Horng, L.~A. Celi, and R.~Mark.
\newblock Mimic-iv.
\newblock \emph{PhysioNet. Available online at: https://physionet.
  org/content/mimiciv/1.0/(accessed August 23, 2021)}, 2020.

\bibitem[Johnson et~al.(2023)Johnson, Bulgarelli, Shen, Gayles, Shammout,
  Horng, Pollard, Moody, Gow, Lehman, et~al.]{johnson2023mimic}
A.~E. Johnson, L.~Bulgarelli, L.~Shen, A.~Gayles, A.~Shammout, S.~Horng, T.~J.
  Pollard, B.~Moody, B.~Gow, L.-w.~H. Lehman, et~al.
\newblock Mimic-iv, a freely accessible electronic health record dataset.
\newblock \emph{Scientific data}, 10\penalty0 (1):\penalty0 1, 2023.

\bibitem[Kallus(2018)]{kallus2018balanced}
N.~Kallus.
\newblock Balanced policy evaluation and learning.
\newblock \emph{Advances in neural information processing systems}, 31, 2018.

\bibitem[Khandani et~al.(2010)Khandani, Kim, and Lo]{khandani2010consumer}
A.~E. Khandani, A.~J. Kim, and A.~W. Lo.
\newblock Consumer credit-risk models via machine-learning algorithms.
\newblock \emph{Journal of Banking \& Finance}, 34\penalty0 (11):\penalty0
  2767--2787, 2010.

\bibitem[Kilbertus et~al.(2017)Kilbertus, Rojas-Carulla, Parascandolo, Hardt,
  Janzing, and Sch{\"o}lkopf]{kilbertus2017avoiding}
N.~Kilbertus, M.~Rojas-Carulla, G.~Parascandolo, M.~Hardt, D.~Janzing, and
  B.~Sch{\"o}lkopf.
\newblock Avoiding discrimination through causal reasoning.
\newblock \emph{arXiv preprint arXiv:1706.02744}, 2017.

\bibitem[Kitagawa and Tetenov(2018)]{kitagawa2018should}
T.~Kitagawa and A.~Tetenov.
\newblock Who should be treated? empirical welfare maximization methods for
  treatment choice.
\newblock \emph{Econometrica}, 86\penalty0 (2):\penalty0 591--616, 2018.

\bibitem[Kusner et~al.(2017)Kusner, Loftus, Russell, and
  Silva]{kusner2017counterfactual}
M.~J. Kusner, J.~Loftus, C.~Russell, and R.~Silva.
\newblock Counterfactual fairness.
\newblock \emph{Advances in neural information processing systems}, 30, 2017.

\bibitem[Larson et~al.(2016)Larson, Mattu, Kirchner, and Angwin]{larson2016how}
J.~Larson, S.~Mattu, L.~Kirchner, and J.~Angwin.
\newblock How we analyzed the compas recidivism algorithm.
\newblock \emph{ProPublica (5 2016)}, 9, 2016.

\bibitem[Mahoney and Mohen(2007)]{mahoney2007method}
J.~F. Mahoney and J.~M. Mohen.
\newblock Method and system for loan origination and underwriting, Oct.~23
  2007.
\newblock US Patent 7,287,008.

\bibitem[Modra et~al.(2022)Modra, Higgins, Abeygunawardana, Vithanage, Bailey,
  and Bellomo]{modra2022sex}
L.~J. Modra, A.~M. Higgins, V.~S. Abeygunawardana, R.~N. Vithanage, M.~J.
  Bailey, and R.~Bellomo.
\newblock Sex differences in treatment of adult intensive care patients: a
  systematic review and meta-analysis.
\newblock \emph{Critical Care Medicine}, 50\penalty0 (6):\penalty0 913--923,
  2022.

\bibitem[Nabi and Shpitser(2018)]{nabi2018fair}
R.~Nabi and I.~Shpitser.
\newblock Fair inference on outcomes.
\newblock In \emph{Proceedings of the AAAI Conference on Artificial
  Intelligence}, volume~32, 2018.

\bibitem[Nilforoshan et~al.(2022)Nilforoshan, Gaebler, Shroff, and
  Goel]{nilforoshan2022causal}
H.~Nilforoshan, J.~D. Gaebler, R.~Shroff, and S.~Goel.
\newblock Causal conceptions of fairness and their consequences.
\newblock In \emph{International Conference on Machine Learning}, pages
  16848--16887. PMLR, 2022.

\bibitem[Pearl(2000)]{pearl:2k}
J.~Pearl.
\newblock \emph{Causality: Models, Reasoning, and Inference}.
\newblock Cambridge University Press, New York, 2000.
\newblock 2nd edition, 2009.

\bibitem[Ple{\v{c}}ko and Bareinboim(2022)]{plecko2022causal}
D.~Ple{\v{c}}ko and E.~Bareinboim.
\newblock Causal fairness analysis.
\newblock \emph{arXiv preprint arXiv:2207.11385}, 2022.
\newblock \textit{(To appear in Foundations and Trends in Machine Learning}).

\bibitem[Ple{\v{c}}ko and Meinshausen(2020)]{plevcko2020fair}
D.~Ple{\v{c}}ko and N.~Meinshausen.
\newblock Fair data adaptation with quantile preservation.
\newblock \emph{Journal of Machine Learning Research}, 21:\penalty0 242, 2020.

\bibitem[Qian and Murphy(2011)]{qian2011performance}
M.~Qian and S.~A. Murphy.
\newblock Performance guarantees for individualized treatment rules.
\newblock \emph{Annals of statistics}, 39\penalty0 (2):\penalty0 1180, 2011.

\bibitem[Sanburn(2015)]{Sanburn15}
J.~Sanburn.
\newblock Facebook thinks some native american names are inauthentic.
\newblock \emph{Time}, Feb. 14 2015.
\newblock URL \url{http://time.com/3710203/facebook-native-american-names/}.

\bibitem[Sutton and Barto(1998)]{sutton1998reinforcement}
R.~S. Sutton and A.~G. Barto.
\newblock \emph{Reinforcement learning: An introduction}.
\newblock MIT press, 1998.

\bibitem[Sweeney(2013)]{Sweeney13}
L.~Sweeney.
\newblock Discrimination in online ad delivery.
\newblock Technical Report 2208240, SSRN, Jan. 28 2013.
\newblock URL \url{http://dx.doi.org/10.2139/ssrn.2208240}.

\bibitem[Szepesv{\'{a}}ri(2010)]{szepesvari2010algorithms}
C.~Szepesv{\'{a}}ri.
\newblock \emph{Algorithms for Reinforcement Learning}.
\newblock Synthesis Lectures on Artificial Intelligence and Machine Learning.
  Morgan {\&} Claypool Publishers, 2010.

\bibitem[Vincent et~al.(1996)Vincent, Moreno, Takala, Willatts,
  De~Mendon{\c{c}}a, Bruining, Reinhart, Suter, and Thijs]{vincent1996sofa}
J.~L. Vincent, R.~Moreno, J.~Takala, S.~Willatts, A.~De~Mendon{\c{c}}a,
  H.~Bruining, C.~Reinhart, P.~Suter, and L.~G. Thijs.
\newblock The sofa (sepsis-related organ failure assessment) score to describe
  organ dysfunction/failure: On behalf of the working group on sepsis-related
  problems of the european society of intensive care medicine (see contributors
  to the project in the appendix), 1996.

\bibitem[White and Lo(2020)]{white2020framework}
D.~B. White and B.~Lo.
\newblock A framework for rationing ventilators and critical care beds during
  the covid-19 pandemic.
\newblock \emph{Jama}, 323\penalty0 (18):\penalty0 1773--1774, 2020.

\bibitem[Wu et~al.(2019)Wu, Zhang, Wu, and Tong]{wu2019pc}
Y.~Wu, L.~Zhang, X.~Wu, and H.~Tong.
\newblock Pc-fairness: A unified framework for measuring causality-based
  fairness.
\newblock \emph{Advances in neural information processing systems}, 32, 2019.

\bibitem[Wunsch et~al.(2020)Wunsch, Hill, Bosch, Adhikari, Rubenfeld, Walkey,
  Ferreyro, Tillmann, Amaral, Scales, et~al.]{wunsch2020comparison}
H.~Wunsch, A.~D. Hill, N.~Bosch, N.~K. Adhikari, G.~Rubenfeld, A.~Walkey, B.~L.
  Ferreyro, B.~W. Tillmann, A.~C. Amaral, D.~C. Scales, et~al.
\newblock Comparison of 2 triage scoring guidelines for allocation of
  mechanical ventilators.
\newblock \emph{JAMA network open}, 3\penalty0 (12):\penalty0
  e2029250--e2029250, 2020.

\bibitem[Zhang and Bareinboim(2018{\natexlab{a}})]{zhang2018equality}
J.~Zhang and E.~Bareinboim.
\newblock Equality of opportunity in classification: A causal approach.
\newblock In S.~Bengio, H.~Wallach, H.~Larochelle, K.~Grauman, N.~Cesa-Bianchi,
  and R.~Garnett, editors, \emph{Advances in Neural Information Processing
  Systems 31}, pages 3671--3681, Montreal, Canada, 2018{\natexlab{a}}. Curran
  Associates, Inc.

\bibitem[Zhang and Bareinboim(2018{\natexlab{b}})]{zhang2018fairness}
J.~Zhang and E.~Bareinboim.
\newblock Fairness in decision-making—the causal explanation formula.
\newblock In \emph{Proceedings of the AAAI Conference on Artificial
  Intelligence}, volume~32, 2018{\natexlab{b}}.

\end{thebibliography}
\bibliographystyle{abbrvnat}

\newpage
\appendix
\section*{\centering\Large Supplementary Material for \textit{Causal Fairness for Outcome Control}}
The source code for reproducing all the experiments can be found in the \href{https://anonymous.4open.science/r/outcome-control-854C/vignette.qmd}{anonymized repository}. Futhermore, the vignette accompanying the main text can be found \href{https://anonymous.4open.science/api/repo/outcome-control-854C/file/vignette.html}{here}. The code is also included with the supplementary materials, in the folder \texttt{source-code}. 
\section{Principal Fairness} \label{appendix:principal-fairness}
We start with the definition of principal fairness:
\begin{definition}[Principal Fairness \citep{imai2020principal}] \label{def:principal-fairness}
Let $D$ be a decision that possibly affects the outcome $Y$. The pair $(Y, D)$ is said to satisfy principal fairness if
\begin{align}
    P(d \mid y_{d_0}, y_{d_1}, x_1) = P(d \mid y_{d_0}, y_{d_1}, x_0), \label{eq:principal-fairness}
\end{align}
for each principal stratum $(y_{d_0}, y_{d_1})$, which can also be written as $D \ci X \mid Y_{d_0}, Y_{d_1}$. Furthermore, define the principal fairness measure (PFM) as:
\begin{align}
    \text{PFM}_{x_0,x_1}(d \mid y_{d_0}, y_{d_1}) = P(d \mid y_{d_0}, y_{d_1}, x_1) - P(d \mid y_{d_0}, y_{d_1}, x_0).
\end{align}
\end{definition}
The above notion of principal fairness aims to capture the intuition described in oracle example in Sec.~\ref{sec:oracle}. However, unlike in the example, the definition needs to be evaluated under imperfect knowledge, when only the collected data is available\footnote{As implied by the definition of the SCM, we almost never have access to the unobserved sources of variation ($u$) that determine the identity of each unit.}. An immediate cause for concern, in this context, is the joint appearance of the potential outcomes $Y_{d_0}, Y_{d_1}$ in the definition of principal fairness. As is well-known in the literature, the joint distribution of the potential outcomes $Y_{d_0}, Y_{d_1}$ is in general impossible to obtain, which leads to the lack of identifiability of the principal fairness criterion:
\begin{proposition}[Principal Fairness is Not Identifiable]
The Principal Fairness (PF) criterion from Eq.~\ref{eq:principal-fairness} is not identifiable from observational or experimental data.
\end{proposition}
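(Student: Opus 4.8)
The plan is to prove non-identifiability constructively, by exhibiting two SCMs $\mathcal{M}^{(1)}, \mathcal{M}^{(2)}$ compatible with the SFM that induce identical observational distributions $P(V)$ and identical experimental (interventional) distributions $P(Y_d \mid x, z, w)$ for every $d$, yet assign different values to the principal fairness measure. Since the PFM is a functional of the SCM, agreement on everything an analyst can estimate from data, together with disagreement on the PFM, is exactly what non-identifiability means.

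First I would isolate the offending ingredient. Using that $D$ is a function of $X, Z, W$ and exogenous noise independent of $U_Y$, so that $D \ci (Y_{d_0}, Y_{d_1}) \mid X, Z, W$, one obtains
\begin{align}
P(d \mid y_{d_0}, y_{d_1}, x) = \frac{\sum_{z,w} P(d \mid x, z, w)\, P(y_{d_0}, y_{d_1} \mid x, z, w)\, P(z, w \mid x)}{\sum_{z,w} P(y_{d_0}, y_{d_1} \mid x, z, w)\, P(z, w \mid x)}.
\end{align}
Every factor here except the cross-world joint $P(y_{d_0}, y_{d_1} \mid x, z, w)$ is identified: $P(d \mid x, z, w)$ and $P(z, w \mid x)$ are plain observational conditionals. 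Thus the entire non-identifiability of the PFM is inherited from the non-identifiability of the cross-world joint of the potential outcomes at fixed $(x, z, w)$.

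Next I would construct the two SCMs by varying only this coupling. In the canonical-type representation, for each stratum $(x, z, w)$ the pair $(Y_{d_0}, Y_{d_1})$ is a categorical type with probabilities $(p_{00}, p_{01}, p_{10}, p_{11})$ (Doomed, Helped, Harmed, Safe). Writing $\mu_0 = P(y_{d_0} \mid x,z,w)$ and $\mu_1 = P(y_{d_1} \mid x,z,w)$, the experimental data pins down only the two marginals $p_{10} + p_{11} = \mu_0$ and $p_{01} + p_{11} = \mu_1$, leaving a one-parameter family of couplings indexed by $p_{11}$ ranging over $[\max(0, \mu_0 + \mu_1 - 1), \min(\mu_0, \mu_1)]$. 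I would fix the mechanisms $f_W, f_D, f_Y$ and all noise distributions except the cross-world dependence of $U_Y$, choosing two distinct couplings at the relevant strata. Because only the coupling changes, both SCMs share the same $P(V)$ and the same single-intervention marginals $P(Y_d \mid x, z, w)$; only the joint $P(y_{d_0}, y_{d_1} \mid x, z, w)$, and hence the reweighting $P(z, w \mid y_{d_0}, y_{d_1}, x)$ in the displayed formula, differs.

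The hard part will be guaranteeing that this difference does not cancel, i.e.\ that the two couplings yield genuinely different PFM values. By the displayed identity this requires $P(d \mid x, z, w)$ to vary across the strata $(z, w)$ whose weight is shifted by the coupling, as otherwise the reweighting leaves $P(d \mid y_{d_0}, y_{d_1}, x)$ invariant. I would therefore exhibit a concrete minimal instance (e.g.\ a binary $W$ with $D = \mathbb{1}(W > \tfrac{1}{2})$, as in the running cancer example) in which $P(d \mid x, z, w)$ genuinely depends on $w$, choose the two couplings so that they redistribute mass across $w$-values within a fixed principal stratum, and verify by direct computation that the resulting PFM values differ while all four type-probabilities remain nonnegative in both models. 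Establishing this single separating instance, together with the reduction above, completes the proof.
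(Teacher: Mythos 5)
Your proposal is correct, but it takes a genuinely different and in some respects more exacting route than the paper. The paper's own argument works on the minimal graph $D \rightarrow Y$ (with no $X$, $Z$, $W$ at all): it exhibits a one-parameter family of SCMs, indexed by $\lambda \in [0, 1-m_1]$, that all agree on $P(d)$, $P(y \mid d_0)$, $P(y\mid d_1)$ yet assign $P(y_{d_0}=0, y_{d_1}=1) = m_1 - m_0 + \lambda$, and concludes that since the cross-world joint defining the principal strata is non-identifiable, the PF criterion cannot be evaluated. You instead push the argument one step further: you derive the exact formula expressing $P(d \mid y_{d_0}, y_{d_1}, x)$ as a reweighting of $P(d \mid x,z,w)$ by the non-identified coupling $P(y_{d_0}, y_{d_1} \mid x,z,w)$, and you correctly observe that non-identifiability of the coupling does \emph{not} automatically transfer to the PFM --- if $P(d \mid x,z,w)$ were constant in $(z,w)$ the reweighting would cancel --- so a separating instance must make the decision rule genuinely depend on $w$. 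This is a real subtlety the paper's short construction glosses over (its two-variable model has no $X$, so the PFM is not even directly evaluated there), and your reduction makes precise exactly what additional structure is needed. What the paper's approach buys is brevity and a fully explicit family of structural equations; what yours buys is a stronger conclusion (two models agreeing on all observational and experimental distributions but disagreeing on the value of the PFM itself, which is the textbook meaning of non-identifiability of the measure). The one outstanding item is that your separating instance is described but not computed; with, say, binary $W$, $D = \mathbb{1}(W=1)$, and two Fr\'echet-admissible couplings that shift the ``helped'' mass between the $w$-strata, the verification is a short direct calculation, so this is a routine completion rather than a gap.
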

The implication of the proposition is that principal fairness, in general, cannot be evaluated, even if an unlimited amount of data was available. One way to see why PF is not identifiable is the following construction. Consider an SCM consisting of two binary variables $D, Y \in \{0, 1\}$ and the simple graph $D \rightarrow Y$. Suppose that we observe $P(d) = p_d$, and $P(y \mid d_1) = m_1$, $P(y \mid d_0) = m_0$ for some constants $p_d, m_1, m_0$ (additionally assume $m_0 \leq m_1$ w.l.o.g.). It is easy to show that these three values determine all of the observational and interventional distributions of the SCM. However, notice that for any $\lambda \in [0, 1 - m_1]$ the SCM given by
\begin{align}
        D \gets & \; U_D\\
        Y \gets & \;\mathbb{1}(U_Y \in [0, m_0 - \lambda]) + D\mathbb{1}(U_Y \in [m_0-\lambda, m_1]) + \\                      &\;(1-D)\mathbb{1}(U_Y \in [m_1, m_1+\lambda]), \nonumber \\
                             U_Y \sim& \; \text{Unif}[0, 1], U_D \sim \; \text{Bernoulli}(p_d),
\end{align}
satisfies $P(d) = p_d, P(y \mid d_1) = m_1$, and $P(y \mid d_0) = m_0$, but the joint distribution $P(y_{d_0} = 0, y_{d_1} = 1) = m_1 - m_0 + \lambda$ depends on the $\lambda$ parameter and is therefore non-identifiable.

\subsection{Monotonicity Assumption}
To remedy the problem of non-identifiability of principal fairness, \citep{imai2020principal} proposes the monotonicity assumption:
\begin{definition}[Monotonicity]
    We say that an outcome $Y$ satisfies monotonicity with respect to a decision $D$ if
    \begin{align} \label{eq:PF-monotonicity}
        Y_{d_1}(u) \geq Y_{d_0}(u).
    \end{align}
\end{definition}
In words, monotonicity says that for every unit, the outcome with the positive decision ($D = 1$) would not be worse than with the negative decision ($D = 0$). We now demonstrate how monotonicity aids the identifiability of principal fairness.
\begin{proposition}
    Under the monotonicity assumption (Eq.~\ref{eq:PF-monotonicity}), the principal fairness criterion is identifiable under the Standard Fairness Model (SFM).
\end{proposition}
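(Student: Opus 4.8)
The plan is to exploit the fact that, although the \emph{unconditional} joint law of $(Y_{d_0}, Y_{d_1})$ is non-identifiable (as the preceding construction shows), monotonicity together with the SFM structure pins down the joint law \emph{conditional on} a covariate cell $(x,z,w)$, and this conditional object is all that is needed to recover $P(d \mid y_{d_0}, y_{d_1}, x)$. First I would observe that monotonicity $Y_{d_1}(u) \geq Y_{d_0}(u)$ rules out the ``Harmed'' stratum $(y_{d_0},y_{d_1})=(1,0)$, leaving only Safe, Helped, and Doomed. Within each cell the three surviving stratum proportions are then determined by the two interventional marginals alone:
\begin{align}
  P(y_{d_0}=1, y_{d_1}=1 \mid x,z,w) &= P(y_{d_0} \mid x,z,w), \\
  P(y_{d_0}=0, y_{d_1}=1 \mid x,z,w) &= P(y_{d_1}\mid x,z,w) - P(y_{d_0}\mid x,z,w) = \Delta(x,z,w), \\
  P(y_{d_0}=0, y_{d_1}=0 \mid x,z,w) &= 1 - P(y_{d_1}\mid x,z,w).
\end{align}

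Next I would identify each marginal $P(y_d \mid x,z,w)$ from observational data. In the SFM the parents of $D$ are exactly $\{X,Z,W\}$ and the only bidirected edge is $X \bidir Z$, so $\{X,Z,W\}$ satisfies the back-door criterion for the effect of $D$ on $Y$, giving $P(y_d \mid x,z,w) = P(y \mid d, x,z,w)$. Substituting this into the display above shows the full per-cell joint $P(y_{d_0}, y_{d_1} \mid x,z,w)$ is identifiable.

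The step I expect to be the crux is connecting this per-cell joint to the conditional actually appearing in the criterion. Here I would use the SFM structure a second time: since the mechanism of $D$ is $f_D(X,Z,W,U_D)$ and the SFM carries no bidirected edge incident to $D$ or $Y$, the noises satisfy $U_D \ci U_Y$; moreover $X,Z,W$ are functions of $(U_X,U_Z,U_W)$ only, so conditioning on them induces no dependence between $U_D$ and $U_Y$. Consequently, given $(x,z,w)$ the decision $D$ is independent of the stratum membership (a deterministic function of $U_Y$ once $x,z,w$ are fixed), yielding
\begin{align}
  P(d, y_{d_0}, y_{d_1} \mid x,z,w) = P(d \mid x,z,w)\, P(y_{d_0}, y_{d_1} \mid x,z,w),
\end{align}
where both factors have just been shown identifiable.

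Finally I would marginalize over $(z,w)$ against the identifiable weights $P(z,w \mid x)$ and apply Bayes' rule,
\begin{align}
  P(d \mid y_{d_0}, y_{d_1}, x) = \frac{\sum_{z,w} P(d \mid x,z,w)\,P(y_{d_0}, y_{d_1} \mid x,z,w)\,P(z,w\mid x)}{\sum_{z,w} P(y_{d_0}, y_{d_1} \mid x,z,w)\,P(z,w\mid x)},
\end{align}
whose right-hand side is a functional of the observational distribution alone, establishing identifiability of the PFM (the sums becoming integrals when $Z,W$ are continuous). The main obstacle is precisely the independence claim $D \ci (Y_{d_0}, Y_{d_1}) \mid x,z,w$: it is what lets us sidestep the non-identifiable unconditional joint, and it is the step where the SFM's exogenous-independence structure, rather than monotonicity, does the essential work — monotonicity only serves to collapse the within-cell joint onto the two identifiable interventional marginals.
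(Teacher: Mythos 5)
Your proof is correct and its core is the same as the paper's: monotonicity kills the harmed stratum $(y_{d_0},y_{d_1})=(1,0)$, so the joint law of $(Y_{d_0},Y_{d_1})$ collapses onto the two interventional marginals — the paper's proof consists of exactly your three per-stratum identities (written unconditionally rather than per covariate cell) and then stops, concluding identifiability ``whenever $P(y_{d_0}),P(y_{d_1})$ are identifiable.'' What you add, and what the paper leaves implicit, is the bridge from those stratum proportions to the quantity $P(d\mid y_{d_0},y_{d_1},x)$ that actually appears in Def.~\ref{def:principal-fairness}: the back-door adjustment $P(y_d\mid x,z,w)=P(y\mid d,x,z,w)$ and, crucially, the independence $D\ci (Y_{d_0},Y_{d_1})\mid X,Z,W$, which the paper asserts and uses only later (in the proof of Thm.~\ref{thm:bf-optimality}) but which is indeed what makes the Bayes-rule aggregation over $(z,w)$ go through. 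Your closing remark correctly locates the division of labor between monotonicity and the SFM's exogenous-independence structure. The one caveat worth flagging is that your adjustment step silently assumes positivity of the deployed policy, $0<P(d\mid x,z,w)<1$; if $f_D$ is deterministic in $(x,z,w)$ the interventional marginals are not point-identified from observational data, which is precisely the hedge the paper builds in with its ``whenever \ldots identifiable'' clause.
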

\begin{proof}
    The main challenge in PF is to obtain the joint distribution $P(y_{d_0}, y_{d_1})$, which is non-identifiable in general. Under monotonicity, however, we have that
    \begin{align}
        Y_{d_0}(u) = 0 \wedge Y_{d_1}(u) = 0 \iff Y_{d_1}(u) = 0, \\
        Y_{d_0}(u) = 1 \wedge Y_{d_1}(u) = 1 \iff Y_{d_0}(u) = 1.
    \end{align}
    Therefore, it follows from monotonicity that
    \begin{align}
        P(y_{d_0} = 1, y_{d_1} = 0) &= 0,\\
        P(y_{d_0} = 0, y_{d_1} = 0) &= P(y_{d_1} = 0),\\
        P(y_{d_0} = 1, y_{d_1} = 1) &= P(y_{d_0} = 1),\\
        P(y_{d_0} = 0, y_{d_1} = 1) &= 1 - P(y_{d_1} = 0) - P(y_{d_0} = 1),
    \end{align}
    thereby identifying the joint distribution whenever the interventional distributions $P(y_{d_0}), P(y_{d_1})$ are identifiable. 
\end{proof}
In the cancer surgery example, the monotonicity assumption would require that the patients have strictly better survival outcomes when the surgery is performed, compared to when it is not. Given the known risks of surgical procedures, the assumption may be rightfully challenged in such a setting. In the sequel, we argue that the assumption of monotonicity is not really necessary, and often does not help the decision-maker, even if it holds true. To fix this issue, in the main text we discuss a relaxation of the PF criterion that suffers from neither of the above two problems but still captures the essential intuition that motivated PF.
\section{Canonical Types \& Bounds} \label{appendix:canonical-types}
\begin{definition}[Canonical Types for Decision-Making] \label{def:canonical-types}
    Let $Y$ be the outcome of interest, and $D$ a binary decision. We then consider four canonical types of units:
    \begin{enumerate}[label=(\roman*)]
        \item units $u$ such that $Y_{d_0}(u) = 1, Y_{d_1}(u) = 1$, called \textit{safe},
        \item units $u$ such that $Y_{d_0}(u) = 1, Y_{d_1}(u) = 0$, called \textit{harmed},
        \item units $u$ such that $Y_{d_0}(u) = 0, Y_{d_1}(u) = 1$, called \textit{helped},
        \item units $u$ such that $Y_{d_0}(u) = 0, Y_{d_1}(u) = 0$, called \textit{doomed}.
    \end{enumerate}
\end{definition}
In decision-making, the goal is to treat as many units who are helped by the treatment, and as few who are harmed by it. As we demonstrate next, the potential outcomes $Y_{d_0}(u), Y_{d_1}(u)$ depend precisely on the canonical types described above.

\begin{proposition}[Canonical Types Decomposition]
    Let $\mathcal{M}$ be an SCM compatible with the SFM. Let $D$ be a binary decision that possibly affects the outcome $Y$. Denote by $(s, d, c, u)(x, z, w)$ the proportion of each of the canonical types from Def.~\ref{def:canonical-types}, respectively, for a fixed set of covariates $(x, z, w)$. It then holds that 
    \begin{align}
        P(y_{d_1} \mid x, z, w) &= c(x, z, w) + s(x, z, w),\\
        P(y_{d_0} \mid x, z, w) &= d(x, z, w) + s(x, z, w).
    \end{align}
    Therefore, we have that
    \begin{align}
        \Delta(x,z,w) &:= P(y_{d_1} \mid x, z, w) - P(y_{d_0} \mid x, z, w) \\
                      &\;= c(x, z, w) - d(x, z, w).
    \end{align}
\end{proposition}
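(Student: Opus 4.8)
The plan is to exploit that $Y_{d_0}$ and $Y_{d_1}$ are both $\{0,1\}$-valued, so that the pair $(Y_{d_0}(u), Y_{d_1}(u))$ can only take the four values listed in Def.~\ref{def:canonical-types}. Consequently, for any fixed covariate cell $(x,z,w)$ the four canonical types (safe, harmed, helped, doomed) form a mutually exclusive and exhaustive partition of the units consistent with $(x,z,w)$. The one point worth stating upfront is that conditioning on $(x,z,w)$ does not pin down the unit, since the exogenous $U$ retains a nondegenerate conditional law $P(u \mid x,z,w)$; the quantities $s, d, c, u$ are therefore the conditional probabilities of each canonical type within the cell, and in particular $s(x,z,w) + d(x,z,w) + c(x,z,w) + u(x,z,w) = 1$.

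First I would rewrite each interventional marginal as the conditional expectation of a binary potential response. Since $Y_{d_1}$ is an indicator,
\begin{align}
    P(y_{d_1} \mid x,z,w) = \ex[Y_{d_1} \mid x,z,w] = P(Y_{d_1} = 1 \mid x,z,w).
\end{align}
The event $\{Y_{d_1} = 1\}$ is the disjoint union of the safe type $(Y_{d_0}=1, Y_{d_1}=1)$ and the helped type $(Y_{d_0}=0, Y_{d_1}=1)$, and this exhausts all units with $Y_{d_1}=1$ because $Y_{d_0}$ must be $0$ or $1$. Summing the two type proportions gives $P(y_{d_1}\mid x,z,w) = s(x,z,w) + c(x,z,w)$. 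By the symmetric argument, $\{Y_{d_0}=1\}$ is the disjoint union of the safe type $(1,1)$ and the harmed type $(1,0)$, yielding $P(y_{d_0}\mid x,z,w) = s(x,z,w) + d(x,z,w)$.

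Finally I would subtract the two identities to obtain
\begin{align}
    \Delta(x,z,w) = \big(s(x,z,w)+c(x,z,w)\big) - \big(s(x,z,w)+d(x,z,w)\big) = c(x,z,w) - d(x,z,w),
\end{align}
where the safe proportion cancels exactly because safe units survive under both decisions and thus contribute equally to $P(y_{d_1})$ and $P(y_{d_0})$. I expect no genuine obstacle here: the claim is a bookkeeping consequence of the binary partition, and the only delicate point is the remark above that $(x,z,w)$ leaves residual randomness over $U$, so $s,d,c,u$ must be read as within-cell conditional type probabilities rather than deterministic labels. Equivalently, the whole result could be derived by taking conditional expectations of the pointwise identities $Y_{d_1}(u) = \mathbb{1}(\text{safe}) + \mathbb{1}(\text{helped})$ and $Y_{d_0}(u) = \mathbb{1}(\text{safe}) + \mathbb{1}(\text{harmed})$.
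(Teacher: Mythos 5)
Your argument is correct and is essentially identical to the paper's proof: both decompose $P(y_{d_1} \mid x,z,w)$ and $P(y_{d_0} \mid x,z,w)$ by the law of total probability over the value of the other potential outcome, identify the resulting joint terms with the canonical type proportions, and subtract so that the safe proportion cancels. The added remark that $s,d,c,u$ are within-cell conditional probabilities over the residual randomness in $U$ is a fair clarification but does not change the argument.
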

\begin{proof}
    Notice that we can write:
    \begin{align}
        P(y_{d_1} \mid x,z,w) &= P(y_{d_1} = 1, y_{d_0} = 1 \mid x,z,w) + P(y_{d_1} = 1, y_{d_0} = 0 \mid x,z,w) \\
        &= s(x,z,w) + c(x,z,w).
    \end{align}
    where the first line follows from the law of total probability, and the second by definition. Similarly, we have that
    \begin{align}
        P(y_{d_0} \mid x,z,w) &= P(y_{d_0} = 1, y_{d_1} = 1 \mid x,z,w) + P(y_{d_0} = 1, y_{d_1} = 0 \mid x,z,w) \\
        &= s(x,z,w) + d(x,z,w),
    \end{align}
    thereby completing the proof.
\end{proof}
The proposition shows us that the degree of benefit $\Delta(x,z,w)$ captures exactly the difference between the proportion of those helped by the treatment, versus those who are harmed by it. From the point of view of the decision-maker, this is very valuable information since higher $\Delta(x, z, w)$ values indicate a higher utility of treating the group corresponding to covariates $(x, z, w)$. This insight can be used to prove Thm.~\ref{thm:bf-optimality}, which states that the policy $D^*$ obtained by Alg.~\ref{algo:optimal-pbf} is optimal:
\begin{proof}
    Note that the objective in Eq.~\ref{eq:dm-opt-1} can be written as:
    \begin{align}
        \ex[Y_D] &= P(Y_D = 1) \\
        &= \sum_{x,z,w} P(Y_D = 1 \mid x,z,w)P(x,z,w) \label{eq:thm1-1}\\
        &= \sum_{x,z,w} \Big[P(Y_{d_1} = 1, D = 1 \mid x,z,w) + P(Y_{d_0} = 1, D = 0 \mid x,z,w)\Big]P(x,z,w). \label{eq:thm1-2}
    \end{align} 
Eq.~\ref{eq:thm1-1} follows from the law of total probability, and Eq.~\ref{eq:thm1-2} from the consistency axiom. Now, note that $Y_{d_0}, Y_{d_1} \ci D \mid X, Z, W$, from which it follows that
\begin{align}
  \ex[Y_D] = \sum_{x,z,w} \Big[&P(y_{d_1} \mid x,z, w)P(D = 1 \mid x,z,w) + P(y_{d_0} \mid x,z, w)P(D = 0 \mid x,z,w)\Big] \\
  &* P(x,z,w). \nonumber
\end{align}
By noting that $P(D = 0 \mid x,z,w) = 1 - P(D = 1 \mid x,z,w)$, we can rewrite the objective as
\begin{align}
    &\sum_{x,z,w} \Big[(s(x,z,w) + c(x,z,w)) P(d \mid x,z,w)  \\[-8pt] &\qquad\quad + (s(x,z,w) + d(x,z,w))(1-P(d \mid x,z,w))\Big] P(x,z,w) \nonumber \\
    &= \sum_{x,z,w} \Big[ s(x,z,w) + P(d \mid x,z,w)[c(x,z,w) - d(x,z,w)] \Big]P(x,z,w) \\
    &= P(y_{d_0} = 1, y_{d_1} = 1) + \sum_{x,z,w} P(d \mid x,z,w)P(x,z,w)\Delta(x,z,w). \label{eq:thm1-LP}
\end{align}
Only the second term in Eq.~\ref{eq:thm1-LP} can be influenced by the decision-maker, and optimizing the term is subject to the budget constraint:
\begin{align}
    \sum_{x,z,w} P(d \mid x,z,w)P(x,z,w) \leq b.
\end{align}
Such an optimization problem is a simple linear programming exercise, for which the policy $D^*$ from Alg.~\ref{algo:optimal-pbf} is a (possibly non-unique) optimal solution.
\end{proof}

Finally, as the next proposition shows, the values of $P(y_{d_1} \mid x, z,w), P(y_{d_0} \mid x, z,w)$ can be used to bound the proportion of different canonical types:
\begin{proposition}[Canonical Types Bounds and Tightness]
    Let $(s, d, c, u)(x, z, w)$ denote the proportion of each of the canonical types from Def.~\ref{def:canonical-types} for a fixed set of covariates $(x, z, w)$. Let $m_1(x, z, w) = P(y_{d_1} \mid x, z, w)$ and $m_0(x,z,w) = P(y_{d_0} \mid x, z, w)$ and suppose that $m_1(x, z, w) \geq m_0(x, z, w)$. We then have that (dropping $(x, z, w)$ from the notation):
    \begin{align}
        d &\in [0, \min (m_0, 1 - m_1)],\\
        c &\in [m_1 - m_0, m_1].
    \end{align}
    In particular, the above bounds are tight, meaning that there exists an SCM $\mathcal{M}$, compatible with the observed data, that attains each of the values within the interval. Under monotonicity, the bounds collapse to single points, with $d = 0$ and $c = m_1 - m_0$.
\end{proposition}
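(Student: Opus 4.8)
The plan is to collapse the four unknown proportions onto a single free parameter and then impose feasibility. Fixing $(x,z,w)$ and invoking the Canonical Types Decomposition, I have $m_1 = c + s$ and $m_0 = d + s$, and since $(s,d,c,u)$ are proportions of the four exhaustive and mutually exclusive types, $s + d + c + u = 1$. Taking the safe proportion $s$ as the free variable, I solve
\begin{align}
    c = m_1 - s, \quad d = m_0 - s, \quad u = 1 - m_0 - m_1 + s.
\end{align}
This is exactly the one residual degree of freedom already exhibited by the $\lambda$-parametrized family used above to prove non-identifiability, now written intrinsically in terms of the type proportions.

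Next I would impose that $(s,d,c,u)$ is a genuine probability vector, i.e.\ every coordinate is nonnegative. The four inequalities $s \geq 0$, $c \geq 0$, $d \geq 0$, $u \geq 0$ become $s \geq 0$, $s \leq m_1$, $s \leq m_0$, and $s \geq m_0 + m_1 - 1$, which combine (using $m_1 \geq m_0$) into
\begin{align}
    \max(0, m_0 + m_1 - 1) \leq s \leq m_0.
\end{align}
Reading off $d = m_0 - s$ over this interval gives $d \in [0, \min(m_0, 1 - m_1)]$, and $c = m_1 - s$ gives the symmetric interval $c \in [m_1 - m_0, \min(m_1, 1 - m_0)]$, which recovers the stated $[m_1 - m_0, m_1]$ in the regime $m_0 + m_1 \leq 1$. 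Because $c$ and $d$ are affine (hence continuous) in $s$, the intermediate value theorem guarantees that every value in these intervals is realized by some feasible $s$, which is what tightness will require.

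For the tightness claim I would turn each admissible $s$ into an explicit SCM that is compatible with the observed data. I keep the mechanisms of $X, Z, W, D$ unchanged, so that the SFM graph and the observational law of these variables are preserved, and replace only the mechanism for $Y$: given $(x,z,w)$, a fresh $U_Y \sim \text{Unif}[0,1]$ is partitioned into four consecutive subintervals of lengths $s, d, c, u$, on which $(Y_{d_0}, Y_{d_1})$ takes the canonical-type values $(1,1), (1,0), (0,1), (0,0)$ respectively, precisely as in the earlier non-identifiability construction. Since under the SFM one has $Y_d \ci D \mid X, Z, W$, matching $m_0$ and $m_1$ also reproduces the observational conditional $P(y \mid x, z, w, d)$; hence the constructed model is compatible with the data while realizing the chosen joint distribution of $(Y_{d_0}, Y_{d_1})$. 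Sweeping $s$ across its feasible interval therefore attains every value of $d$ and $c$ in the claimed ranges.

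Finally, under monotonicity $Y_{d_1}(u) \geq Y_{d_0}(u)$ rules out the harmed type, forcing $d = 0$; then $d = m_0 - s = 0$ gives $s = m_0$ and $c = m_1 - m_0$, so both intervals degenerate to single points. I expect the bounds themselves to be immediate from the linear parametrization; the step requiring the most care is tightness, where one must confirm that the explicit $U_Y$-partition construction respects the full SFM and reproduces the entire observational distribution, not merely the marginals $m_0$ and $m_1$, so that each realized model is genuinely compatible with the data.
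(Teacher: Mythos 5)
Your proof is correct and follows essentially the same route as the paper's: both reduce the four type proportions to a one-parameter affine family via the constraints $s+c=m_1$, $s+d=m_0$, $s+d+c+u=1$ (you parametrize by $s$, the paper by $d$), read off the bounds from nonnegativity, and establish tightness with the same $U_Y$-interval-partition construction for $f_Y$. Your observation that the sharp upper bound for $c$ is $\min(m_1, 1-m_0)$ rather than $m_1$ when $m_0+m_1>1$ is a valid minor refinement of the stated interval, but otherwise the arguments coincide.
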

\begin{proof}
    There are three linear relations that the values $s, d, c, u$ obey:
    \begin{align}
        &s + c = m_1,\label{eq:can-type-c1}\\
        &s + d = m_0,\label{eq:can-type-c2}\\
        &s + u + d + c = 1. \label{eq:can-type-c3}
    \end{align}
    On top of this, we know that $s, d, c, u$ are all non-negative. Based on the linear relations, we know that the following parametrization of the vector $(s, d, c, u)$ holds
    \begin{align} \label{eq:canonical-types-parametrization}
        (s, d, c, u) = (m_0 - d, d, d + m_1 - m_0, 1 - m_1 - d),
    \end{align}
    which represents a line in the 3-dimensional space $(s, d, c)$.
    In particular, we know that the values of $(s, d, c)$ have to lie below the unit simplex in Fig.~\ref{fig:canonical-simplex} (in yellow). In particular, the red and the blue planes represent the linear constraints from Eq.~\ref{eq:can-type-c1}-\ref{eq:can-type-c2}. The line parametrized in Eq.~\ref{eq:canonical-types-parametrization} lies at the intersection of the red and blue planes. Notice that $d \in [0, \min(m_0, 1-m_1)]$ since each of the elements in Eq.~\ref{eq:canonical-types-parametrization} is positive. This bound on $d$ also implies that $c \in [m_1 - m_0, m_1]$. Finally, we need to construct an $f_Y$ mechanism that achieves any value within the bounds. To this end, define
    \begin{align}
        f_Y(x, z, w, d, u_y) =& \mathbb{1}(u_y \in [0, s]) + d*\mathbb{1}(u_y \in [s, s+c]) + \\                      &(1-d) * \mathbb{1}(u_y \in [s+c, s+c+d]), \\
                             u_y \sim& \; \text{Unif}[0, 1].
    \end{align} 
    which is both feasible and satisfies the proportion of canonical types to be $(s, d, c, u)$.
\end{proof}

\begin{figure}
    \centering
    \begin{tikzpicture}
\begin{axis}[
xticklabels={0,0,,,,,1},
yticklabels={0,,,1},
zticklabels={0,,,1},
xlabel style={sloped, anchor=south},
ylabel style={sloped, anchor=south},
zlabel style={sloped, anchor=center},
xlabel = {$s(x, z, w)$},
ylabel = {$c(x, z, w)$},
zlabel = {$d(x, z, w)$},
grid=both,
grid style={line width=.1pt, draw=gray!10},
major grid style={line width=.2pt,draw=gray!50},
]
\addplot3[
    patch,
] 
coordinates {
(1,0,0) 
(0,1,0) 
(0,0,1)
};
% \addplot3[
%     patch, red, opacity = 0.4,
%     mesh/cols=1
% ] 
% coordinates {

% };
\addplot3[surf,mesh/rows=2,fill=blue,opacity=0.4] coordinates {
(0.7,0, 0) (0.7,0,0.3)

(0, 0.7, 0)  (0,0.7,0.3)
};

\addplot3[surf,mesh/rows=2,fill=red,opacity=0.4] coordinates {
(0.3,0, 0) (0.3,0.7,0)

(0, 0, 0.3)  (0,0.7,0.3)
};
\addplot3 [domain=0:0.3, samples y=1, black, dashed] (x, 0.7-x, 0.3-x);
\end{axis}
\end{tikzpicture}
    \caption{Canonical types solution space. The unit simplex is shown in yellow, the $s + c = m_1$ plane in blue, and the $s + d = m_0$ plane in red. The solution space for the possible values of $(s(x,z,w), c(x,z,w), d(x,z,w))$ lies at the intersection of the red and blue planes, indicated by the dashed black line.}
    \label{fig:canonical-simplex}
\end{figure}
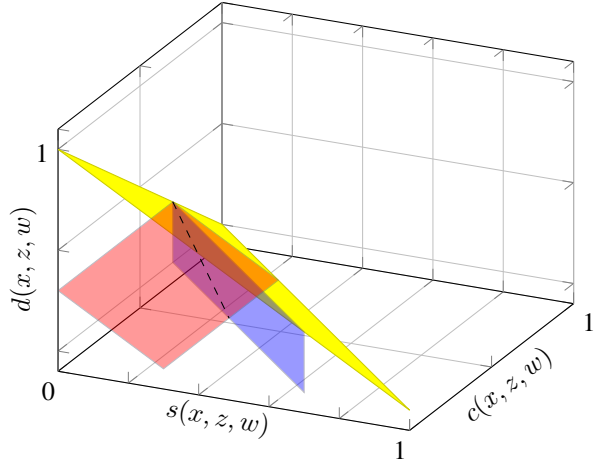
\section{Proof of Thm.~\ref{thm:cbf-optimality}} \label{appendix:ubf-optimality}
\begin{proof}
The first part of the theorem states the optimality of the $D^{CF}$ policy in the counterfactual world. Given that the policy uses the true benefit values from the counterfactual world, we apply the argument of Thm.~\ref{thm:bf-optimality} to prove its optimality.

We next prove the optimality of the $D^{UT}$ policy from Alg.~\ref{algo:utilitarian-pbf}. In Step~\ref{step:is-scarce} we check whether all individuals with a positive benefit can be treated. If yes, then the policy $D^{UT}$ is the overall optimal policy. If not, in Step~\ref{step:is-M-bounded} we check whether the overall optimal policy has a disparity bounded by $M$. If this is the case, $D^{UT}$ is the overall optimal policy for a budget $\leq b$, and cannot be strictly improved. For the remainder of the proof, we may suppose that $D^{UT}$ uses the entire budget $b$ (since we are operating under scarcity), and that $D^{UT}$ has introduces a disparity $\geq M$. We also assume that the benefit $\Delta$ admits a density, and that probability $P(\Delta \in [a, b] \mid x) > 0$ for any $[a, b] \subset [0, 1]$ and $x$.

Let $\delta^{(x_0)}, \delta^{(x_1)}$ be the two thresholds used by the $D^{UT}$ policy. Suppose that $\widetilde{D}^{UT}$ is a policy that has a higher expected utility and introduces a disparity bounded by $M$, or treats everyone in the disadvantaged group. Then there exists an alternative policy $\overline{D}^{UT}$ with a higher or equal utility that takes the form
\begin{align}
    \overline{D}^{UT} = \begin{cases}
        1 \text{ if } \Delta(x_1,z,w) > \delta^{(x_1)'}, \\
        1 \text{ if } \Delta(x_0,z,w) > \delta^{(x_0)'}, \\
        0 \text{ otherwise}.
    \end{cases}
\end{align}
with $\delta^{(x_0)'}, \delta^{(x_1)'}$ non-negative (otherwise, the policy can be trivially improved). In words, for any policy $\overline{D}^{UT}$ there is a threshold based policy that is no worse. The policy $D^{UT}$ is also a threshold based policy.
Now, if we had
\begin{align}
    \delta^{(x_1)'} &< \delta^{(x_1)} \\
    \delta^{(x_0)'} &< \delta^{(x_0)}
\end{align}
it would mean policy $\overline{D}^{UT}$ is using a larger budget than $D^{UT}$. However, $D^{UT}$ uses a budget of $b$, making $\overline{D}^{UT}$ infeasible. Therefore, we must have that
\begin{align}
    \label{eq:case-A}
    \delta^{(x_1)'} &< \delta^{(x_1)}, \delta^{(x_0)'} > \delta^{(x_0)} \text{ or}\\
    \delta^{(x_1)'} &> \delta^{(x_1)}, \delta^{(x_0)'} < \delta^{(x_0)}. \label{eq:case-B}
\end{align}

We first handle the case in Eq.~\ref{eq:case-A}. In this case, the policy $\overline{D}^{UT}$ introduces a larger disparity than $D^{UT}$. Since the disparity of $D^{UT}$ is at least $M$, the disparity of $\overline{D}^{UT}$ is strictly greater than $M$. Further, note that $\delta^{(x_0)'} > \delta^{(x_0)} \geq 0$, showing that $\overline{D}^{UT}$ does not treat all individuals with a positive benefit in the disadvantaged group. Combined with a disparity of $> M$, this makes the policy $\overline{D}^{UT}$ infeasible.

For the second case in Eq.~\ref{eq:case-B}, let $U(\delta_0, \delta_1$) denote the utility of a threshold based policy:
\begin{align}
    U(\delta_0, \delta_1) = \ex[\Delta \mathbb{1}(\Delta > \delta_0)\mathbb{1}(X =x_0)] + \ex[\Delta \mathbb{1}(\Delta > \delta_1)\mathbb{1}(X =x_1)]. \label{eq:max-benefit}
\end{align}
Thus, we have that
\begin{align}
    U(\delta^{(x_0)}, \delta^{(x_1)}) - U(\delta^{(x_0)'}, \delta^{(x_1)'}) 
    =& \ex[\Delta \mathbb{1}(\Delta \in [\delta^{(x_1)}, \delta^{(x_1)'}] )\mathbb{1}(X =x_1)] \\
    &- \ex[\Delta \mathbb{1}(\Delta \in [\delta^{(x_0)'}, \delta^{(x_0)}] )\mathbb{1}(X =x_0)] \\
    \geq& \delta^{(x_1)}\ex[\mathbb{1}(\Delta \in [\delta^{(x_1)}, \delta^{(x_1)'}] )\mathbb{1}(X =x_1)] \\ &- \delta^{(x_0)}\ex[\mathbb{1}(\Delta \in [\delta^{(x_0)'}, \delta^{(x_0)}] )\mathbb{1}(X =x_0)] \\
    \geq& \delta^{(x_0)}\big(\ex[\mathbb{1}(\Delta \in [\delta^{(x_1)}, \delta^{(x_1)'}] )\mathbb{1}(X =x_1)] \\ &\qquad- \ex[\mathbb{1}(\Delta \in [\delta^{(x_0)'}, \delta^{(x_0)}] )\mathbb{1}(X =x_0)] \big) \\
    =& \delta^{(x_0)}\big(P(\Delta \in [\delta^{(x_1)}, \delta^{(x_1)'}], x_1)\\ &\qquad- P(\Delta \in [\delta^{(x_0)'}, \delta^{(x_0)}], x_0)\big) \\
    \geq& 0,
\end{align}
where the last line follows from the fact that $\overline{D}^{UT}$ has a budget no higher than $D^{UT}$. Thus, this case also gives a contradiction.

Therefore, we conclude that policy $D^{UT}$ is optimal among all policies with a budget $\leq b$ that either introduce a bounded disparity in resource allocation $|P(d \mid x_1) - P(d \mid x_0)| \leq M$ or treat everyone with a positive benefit in the disadvantaged group.
\end{proof}
\section{Equivalence of CF and UT Methods in Alg.~\ref{algo:utilitarian-pbf}} \label{appendix:ctf-crossing}
A natural question to ask is whether the two methods in Alg.~\ref{algo:utilitarian-pbf} yield the same decision policy in terms of the individuals that are selected for treatment. To examine this issue, we first define the notion of counterfactual crossing: 
\begin{definition}[Counterfactual crossing]
    We say that two units of the population $u_1, u_2$ satisfy counterfactual crossing with respect to an intervention $C$ if
    \begin{enumerate}[label=(\roman*)]
        \item $u_1, u_2$ belong to the same protected group, $X(u_1) = X(u_2)$.
        \item unit $u_1$ has a higher factual benefit than $u_2$, $\Delta(u_1) > \Delta(u_2)$,
        \item unit $u_1$ has a lower counterfactual benefit than $u_2$ under the intervention $C$, $\Delta_C(u_1) < \Delta_C(u_2)$.
    \end{enumerate}
\end{definition}
In words, two units satisfy counterfactual crossing if $u_1$ has a higher benefit than $u_2$ in the factual world, while in the counterfactual world the benefit is larger for the unit $u_2$. Based on this notion, we can give a condition under which the causal and utilitarian approaches are equivalent: 
\begin{proposition}[Causal and Utilitarian Equivalence]
    Suppose that no two units of the population satisfy counterfactual crossing with respect to an intervention $C$, and suppose that the distribution of the benefit $\Delta$ admits a density. Then, the causal approach based on applying Alg.~\ref{algo:optimal-pbf} with counterfactual benefit $\Delta_C$, and the utilitarian approach based on factual benefit $\Delta$ and the disparity $M$ defined in Eq.~\ref{eq:causal-max-disparity}, will select the same set of units for treatment.
\end{proposition}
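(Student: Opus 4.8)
The plan is to recast the no-crossing hypothesis as a within-group monotone relationship between $\Delta$ and $\Delta_C$, use it to rewrite $D^{CF}$ as a group-wise threshold rule on the \emph{factual} benefit, and then match the per-group treatment fractions of $D^{CF}$ and $D^{UT}$.

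First I would fix a protected group $X=x$ and argue that, on $\{X=x\}$, the orderings induced by $\Delta$ and by $\Delta_C$ agree. Conditions (ii)--(iii) in the definition of counterfactual crossing say exactly that within a group one cannot have $\Delta(u_1)>\Delta(u_2)$ together with $\Delta_C(u_1)<\Delta_C(u_2)$; ruling out all such pairs is comonotonicity of $\Delta$ and $\Delta_C$ on that group. With the density assumption this produces, for each $x$, a nondecreasing map $g_x$ with $\Delta_C=g_x(\Delta)$ on $\{X=x\}$, so that $\mathbb{1}(\Delta_C \ge \delta_{CF})$ coincides on the group with a factual threshold rule $\mathbb{1}(\Delta \ge t_x)$ for $t_x:=g_x^{-1}(\delta_{CF})$. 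Hence $D^{CF}$ is a two-threshold factual policy that treats the top $p^{CF}_x:=P(\Delta_C\ge\delta_{CF}\mid x)=P(\Delta\ge t_x\mid x)$ fraction of each group.

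Next I would note that $D^{UT}$ is by construction also a group-wise threshold policy on $\Delta$ (thresholds $\delta^{(x_0)},\delta^{(x_1)}$), and that under scarcity both policies spend the full budget, $\sum_x p^{CF}_x P(x)=\sum_x p^{UT}_x P(x)=b$. Since a within-group threshold rule always selects the highest-benefit units, two such policies treat the same set of units iff they treat equal fractions of each group, so the proof reduces to matching $(p^{CF}_{x_0},p^{CF}_{x_1})$ with $(p^{UT}_{x_0},p^{UT}_{x_1})$. By definition $D^{CF}$ has disparity exactly $M=|p^{CF}_{x_1}-p^{CF}_{x_0}|$, and in the relevant regime Step~\ref{step:find-slack} of Alg.~\ref{algo:utilitarian-pbf} chooses $\delta^{(x_0)},\delta^{(x_1)}$ so that $D^{UT}$ also exhausts $b$ with disparity equal to $M$ in the same direction. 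The fractions are then the unique solution of $p_{x_0}P(x_0)+p_{x_1}P(x_1)=b$ and $p_{x_1}-p_{x_0}=\pm M$ with the sign fixed, namely $p_{x_0}=b\mp MP(x_1)$ and $p_{x_1}=b\pm MP(x_0)$, so $p^{UT}_x=p^{CF}_x$ for both $x$ and the policies coincide.

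The main obstacle is this last step: one must guarantee that $D^{UT}$ binds the disparity constraint at $M$ and in the same orientation as $D^{CF}$, rather than returning the unconstrained single-threshold optimum of Step~\ref{step:is-M-bounded}. Comonotonicity alone does not force this --- it reduces both policies to factual threshold rules and matches treated sets once the fractions agree, but it does not by itself exclude the case where the single factual threshold already satisfies $|p_{x_1}-p_{x_0}|\le M$, in which $D^{UT}$ would generally differ from $D^{CF}$. Closing this gap requires the defining premise of the construction, namely that intervening along $C$ removes a pathway that discriminates against the disadvantaged group, so that passing from $\Delta$ to $\Delta_C$ shrinks the allocation gap rather than reversing or creating it; under that condition the factual disparity is at least $M$ and co-directional, the constraint is active, and the fraction-matching argument goes through.
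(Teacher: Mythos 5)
Your route is the paper's route, carried out with more care. The paper's proof reduces the claim to exactly the two ingredients you identify: (a) within-group comonotonicity of $\Delta$ and $\Delta_C$, obtained from the no-crossing hypothesis as you do, and (b) the assertion that $D^{CF}$ and $D^{UT}$ ``treat the same number of individuals in the $x_0$ and $x_1$ groups,'' from which the conclusion follows because both are within-group top-fraction selections. The step you flag as unresolved is precisely assertion (b), and the paper offers no justification for it beyond stating it.

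Your worry is legitimate: (b) does not follow from no-crossing plus the density assumption. Concretely, take $P(x_0)=P(x_1)=1/2$, $\Delta \sim \text{Unif}[0,1]$ in both groups, $\Delta_C=\Delta$ on the $x_0$ group and $\Delta_C=\Delta^2$ on the $x_1$ group (monotone within each group, hence no counterfactual crossing), and budget $b=1/2$. Then $\delta_{CF}$ solves $\delta+\sqrt{\delta}=1$, so $D^{CF}$ treats per-group fractions $\approx(0.618,\,0.382)$ and $M\approx 0.236$, while the single factual threshold $\delta_{UT}=1/2$ treats fractions $(0.5,\,0.5)$ with disparity $0\leq M$; Step~\ref{step:is-M-bounded} of Alg.~\ref{algo:utilitarian-pbf} therefore returns that single-threshold policy, and the two treated sets differ. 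So the proposition requires the additional hypothesis you articulate at the end --- that the factual disparity at budget $b$ exceeds $M$ and is oriented the same way as the counterfactual disparity, so that Step~\ref{step:find-slack} is reached and binds the disparity at exactly $M$ in the matching direction --- or else $D^{UT}$ must be read as the two-threshold policy that always saturates the disparity at $M$. Under either reading your fraction-matching computation completes the argument; as stated, neither your proposal nor the paper's proof handles the case where Step~\ref{step:is-M-bounded} returns early.
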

\begin{proof}
The policy $D^{UT}$ treats individuals who have the highest benefit $\Delta$ in each group. The $D^{CF}$ policy treats individuals with the highest counterfactual benefit $\Delta_C$. Importantly, the policies treat the same number of individuals in the $x_0$ and $x_1$ groups. 
Note that, in the absence of counterfactual crossing, the relative ordering of the values of $\Delta, \Delta_C$ does not change, since
\begin{align}
    \Delta(u_1) > \Delta(u_2) \iff \Delta_C(u_1) > \Delta_C(u_2).
\end{align}
Thus, since both policies pick the same number of individuals, and the relative order of $\Delta, \Delta_C$ is the same, $D^{UT}$ and $D^{CF}$ will treat the same individuals. 
\end{proof}
\section{Experiment} \label{appendix:experiments}
We apply the causal framework of outcome control to the problem of allocating mechanical ventilation in intensive care units (ICUs), which is recognized as an important task when resources are scarce \citep{biddison2019too}, such as during the COVID-19 pandemic \citep{white2020framework, wunsch2020comparison}. An increasing amount of evidence indicates that a sex-specific bias in the process of allocating mechanical ventilation may exist \citep{modra2022sex}, and thus the protected attribute $X$ will be sex ($x_0$ for females, $x_1$ for males).

To investigate this issue using the tools developed in this paper, we use the data from the MIMIC-IV dataset \citep{johnson2023mimic, johnson2020mimic} that originates from the Beth Israel Deaconess Medical Center in Boston, Massachusetts. In particular, we consider the cohort of all patients in the database admitted to the ICU. Patients who are mechanically ventilated immediately upon entering the ICU are subsequently removed. By focusing on the time window of the first 48 hours from admission to ICU, for each patient we determine the earliest time of mechanical ventilation, labeled $t_{MV}$. Since mechanical ventilation is used to stabilize the respiratory profile of patients, for each patient we determine the average oxygen saturation in the three-hour period $[t_{MV}-3, t_{MV}]$ prior to mechanical ventilation, labeled O$_2$-pre. We also determine the oxygen saturation in the three-hour period following ventilation $[t_{MV}, t_{MV}+3]$, labeled O$_2$-post. For controls (patient not ventilated at any point in the first 48 hours), we take the reference point as 12 hours after ICU admission, and calculate the values in three hours before and after this time. Patients' respiratory stability, which represents the outcome of interest $Y$, is measured as follows:
\begin{align}
    Y := \begin{cases}
    0 \text{ if O}_2\text{-post} \geq 97, \\
    -(\text{O}_2\text{-post}-97)^2 \text{ otherwise}.
        \end{cases}
\end{align}
Values of oxygen saturation above 97 are considered stable, and the larger the distance from this stability value, the higher the risk for the patient. We also collect other important patient characteristics before intervention that are the key predictors of outcome, including the SOFA score \citep{vincent1996sofa}, respiratory rate, and partial oxygen pressure (PaO$_2$). The data loading is performed using the \texttt{ricu} R-package \citep{bennett2021ricu}, and the source code for reproducing the entire experiment can be found \href{https://anonymous.4open.science/r/outcome-control-854C/respirators.R}{here}. 

\textbf{Step 1: Obtain the SFM.} After obtaining the data, the first step of the modeling is to obtain the standard fairness model (SFM). The SFM specification is the following:
\begin{align}
    X &= \text{sex}, \\
    Z &= \text{age}, \\
    W &= \{\text{SOFA score, respiratory rate, PaO}_2\}, \\
    D &= \text{mechanical ventilation}, \\
    Y &= \text{respiratory stability}.
\end{align}

\begin{figure}
     \centering
     \begin{subfigure}[b]{0.45\textwidth}
         \centering          \includegraphics[keepaspectratio,width=\columnwidth]{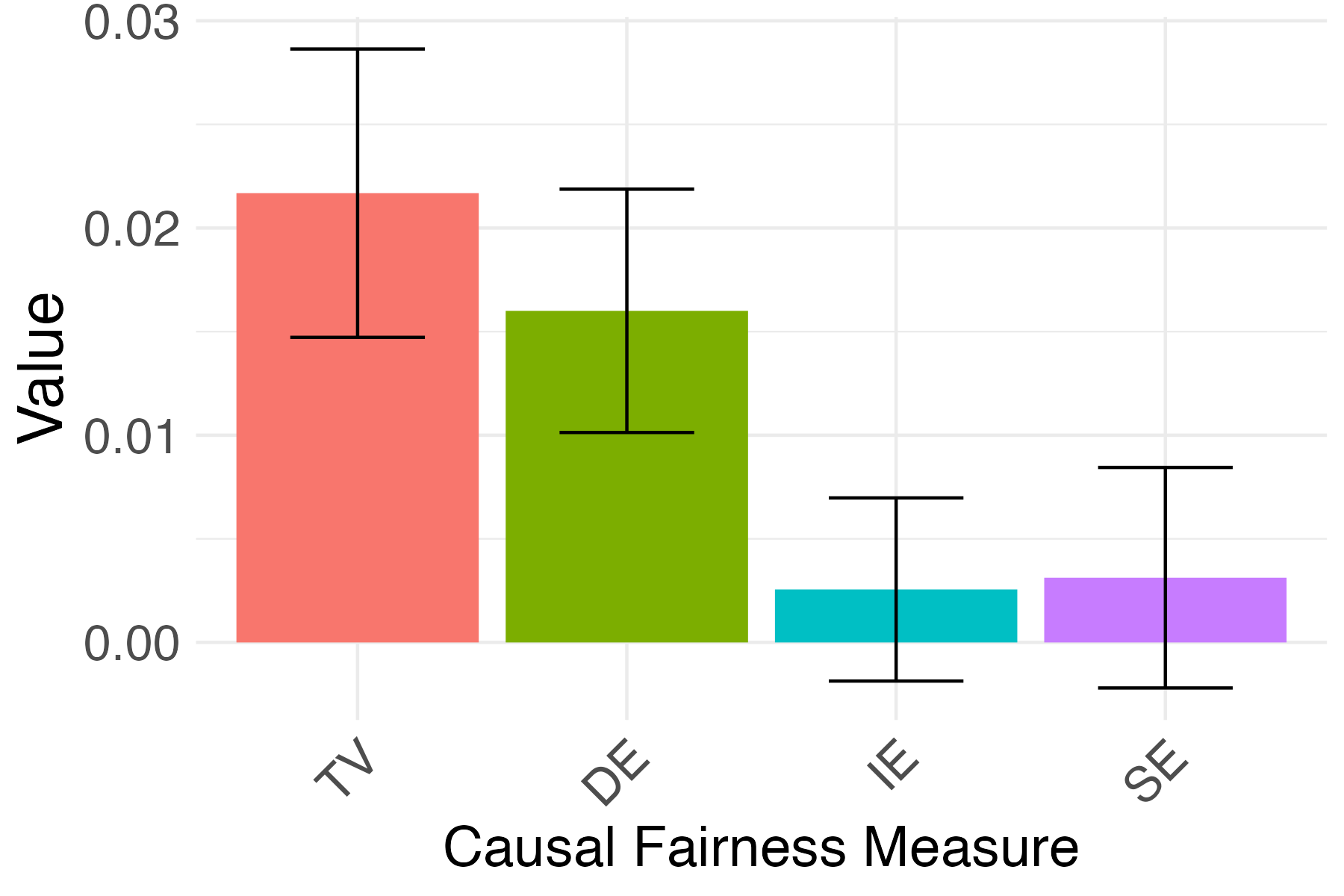}
          \caption{Disparity in $D^{curr}$ decomposed.}
          \label{fig:d-curr-decomposed}
     \end{subfigure}
     \hfill
          \begin{subfigure}[b]{0.45\textwidth}
         \centering
        \includegraphics[keepaspectratio,width=\columnwidth]{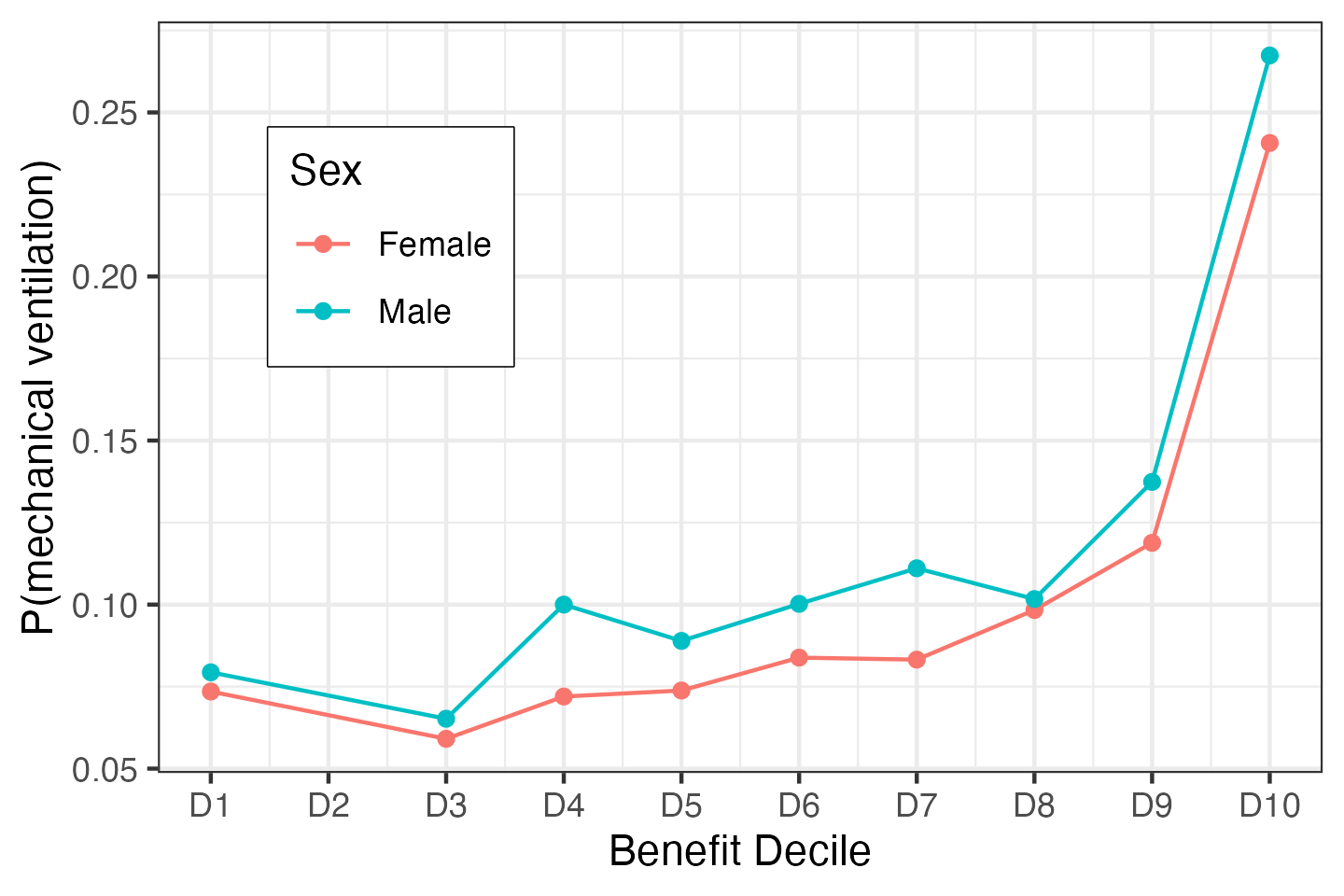}
          \caption{Benefit fairness criterion on $D^{curr}$.}
          \label{fig:benefit-calibration}
     \end{subfigure}
     \caption{Analyzing the existing policy $D^{curr}$.}
     % \label{fig:toolkit-part-II}
\end{figure}
\textbf{Step 2: Analyze the current policy using Alg.~\ref{algo:benefit-decomposition}.} Then, we perform an analysis of the currently implemented policy $D^{curr}$, by computing the disparity in resource allocation, $P(d^{curr} \mid x_1) - P(d^{curr} \mid x_0)$, and also the causal decomposition of the disparity into its direct, indirect, and spurious contributions:
\begin{align}
    P(d^{curr} \mid x_1) - P(d^{curr} \mid x_0) = \underbrace{1.6\%}_{\text{DE}} + \underbrace{0.3\%}_{\text{IE}} + \underbrace{0.3\%}_{\text{SE}}. \label{eq:d-curr-decomp}
\end{align}
The results are shown in Fig.~\ref{fig:d-curr-decomposed}, with vertical bars indicating 95\% confidence intervals obtained using bootstrap. The decomposition demonstrates that the decision to mechanically ventilate a patient has a large direct effect of the protected attribute $X$, while the indirect and spurious effects explain a smaller portion of the disparity in resource allocation. We then compute
\begin{align}
    P(d^{curr} \mid \Delta = \delta, x_1) - P(d^{curr} \mid \Delta = \delta, x_0),
\end{align}
across the deciles of the benefit $\Delta$. In order to do so, we need to estimate the conditional potential outcomes $Y_{d_0}, Y_{d_1}$, and in particular their difference $\ex[Y_{d_1} - Y_{d_0} \mid x, z, w]$. We fit an \texttt{xgboost} model which regresses $Y$ on $D, X, Z,$ and $W$, to obtain the fit $\widehat{Y}$. The learning rate was fixed at $\eta = 0.1$, and the optimal number of rounds was chosen via 10-fold cross-validation. We then use the obtained model to generate predictions
\begin{align}
    \widehat{Y}_{d_1}(x,z,w), \widehat{Y}_{d_0}(x,z,w),
\end{align}
from which we can estimate the benefit $\Delta$. 
The results for the probability of treatment given a fixed decile are shown in Fig.~\ref{fig:benefit-calibration}. Interestingly, at each decile, women are less likely to be mechanically ventilated, indicating a possible bias.

\textbf{Step 3: Apply Alg.~\ref{algo:optimal-pbf} to obtain $D^*$.}
Our next step is to introduce an optimal policy that satisfies benefit fairness. To do so, we make use of the benefit values. In our cohort of 50,827 patients, a total of 5,404 (10.6\%) are mechanically ventilated. We assume that the new policy $D^*$ needs to achieve the same budget. Therefore, we bin patients according to the percentile of their estimated benefit $\Delta$. For the percentiles $[90\%, 100\%]$, all of the patients are treated. In the $89$-$90$ percentile, only $60\%$ of the patients can be treated. We thus make sure that
\begin{align}
    P(d^* \mid \Delta \in [\delta_{89\%}, \delta_{90\%}], x_1) = P(d^* \mid \Delta \in [\delta_{89\%}, \delta_{90\%}], x_0) = 0.6. 
\end{align}
Due to the construction, the policy $D^*$ satisfies the benefit fairness criterion from Def.~\ref{def:benefit-fairness}.

\textbf{Step 4: Apply Alg.~\ref{algo:benefit-decomposition} to analyze $D^*$.}
We next decompose the disparity of the new policy $D^*$, and also decompose the disparity in the benefit $\Delta$. We obtain the following results:
\begin{align}
    P(d^* \mid x_1) - P(d^* \mid x_0) &= \underbrace{0.1\%}_{\text{DE}} + \underbrace{0.6\%}_{\text{IE}} + \underbrace{0.2\%}_{\text{SE}}, \label{eq:d-star-decomp} \\
    \ex(\Delta \mid x_1) - \ex(\Delta \mid x_0) &= \underbrace{0.08}_{\text{DE}} + \underbrace{0.21}_{\text{IE}} + \underbrace{0.04}_{\text{SE}}.
\end{align}
The two decompositions are also visualized in Fig.~\ref{fig:algo-2-mech-vent}. Therefore, even after applying benefit fairness, some disparity between the sexes remains. The causal analysis reveals that males require more mechanical ventilation because they are more severely ill (indirect effect). They also require more mechanical ventilation since they are older (spurious effect), although this effect is not significant. Finally, males also seem to benefit more from treatment when all other variables are kept the same (direct effect, see Fig.~\ref{fig:delta-decomposed}).
\begin{figure}
     \centering
     \begin{subfigure}[b]{0.45\textwidth}
         \centering          \includegraphics[keepaspectratio,width=\columnwidth]{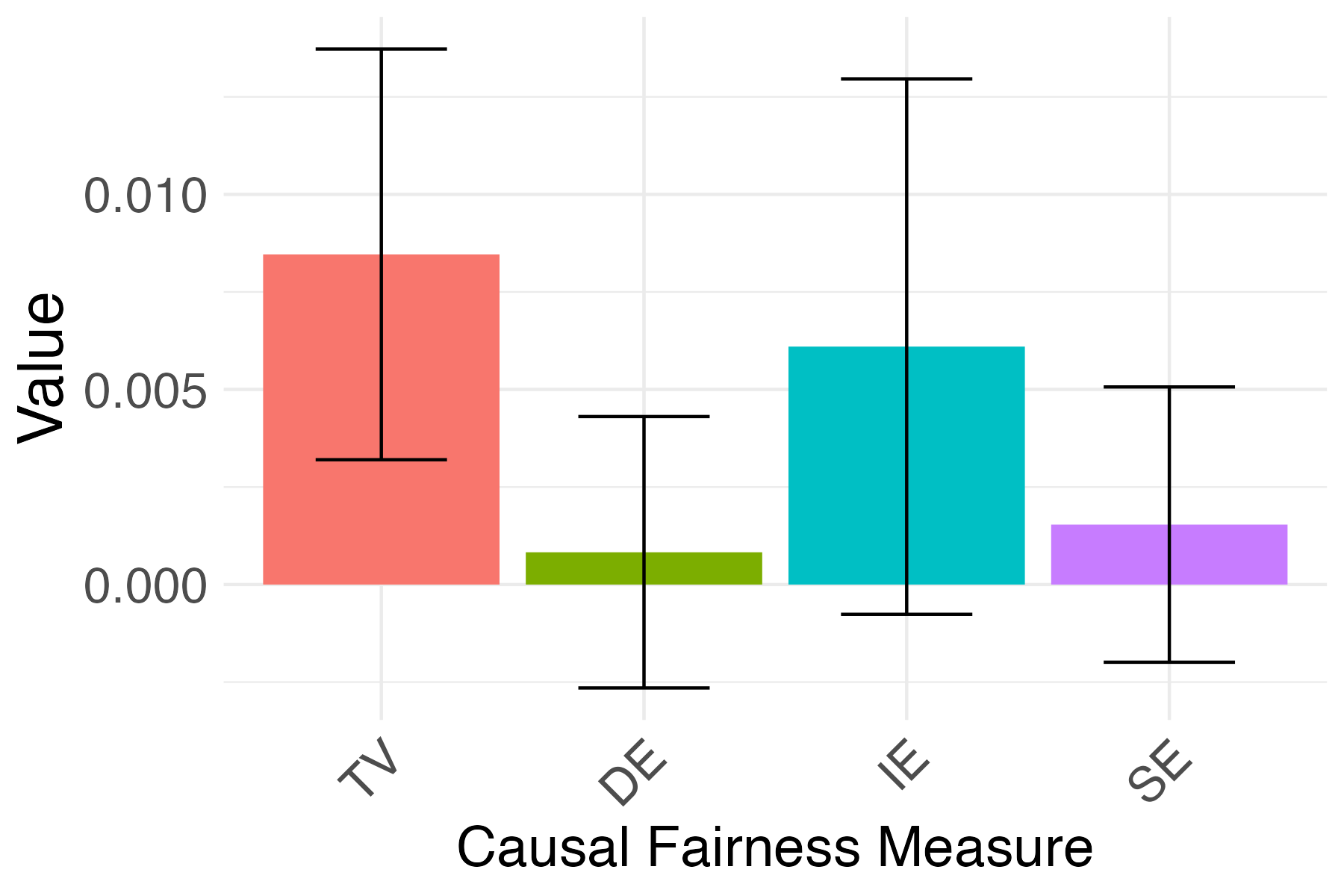}
          \caption{Disparity in $D^*$ decomposed.}
          \label{fig:d-star-decomposed}
     \end{subfigure}
     \hfill
          \begin{subfigure}[b]{0.45\textwidth}
         \centering
        \includegraphics[keepaspectratio,width=\columnwidth]{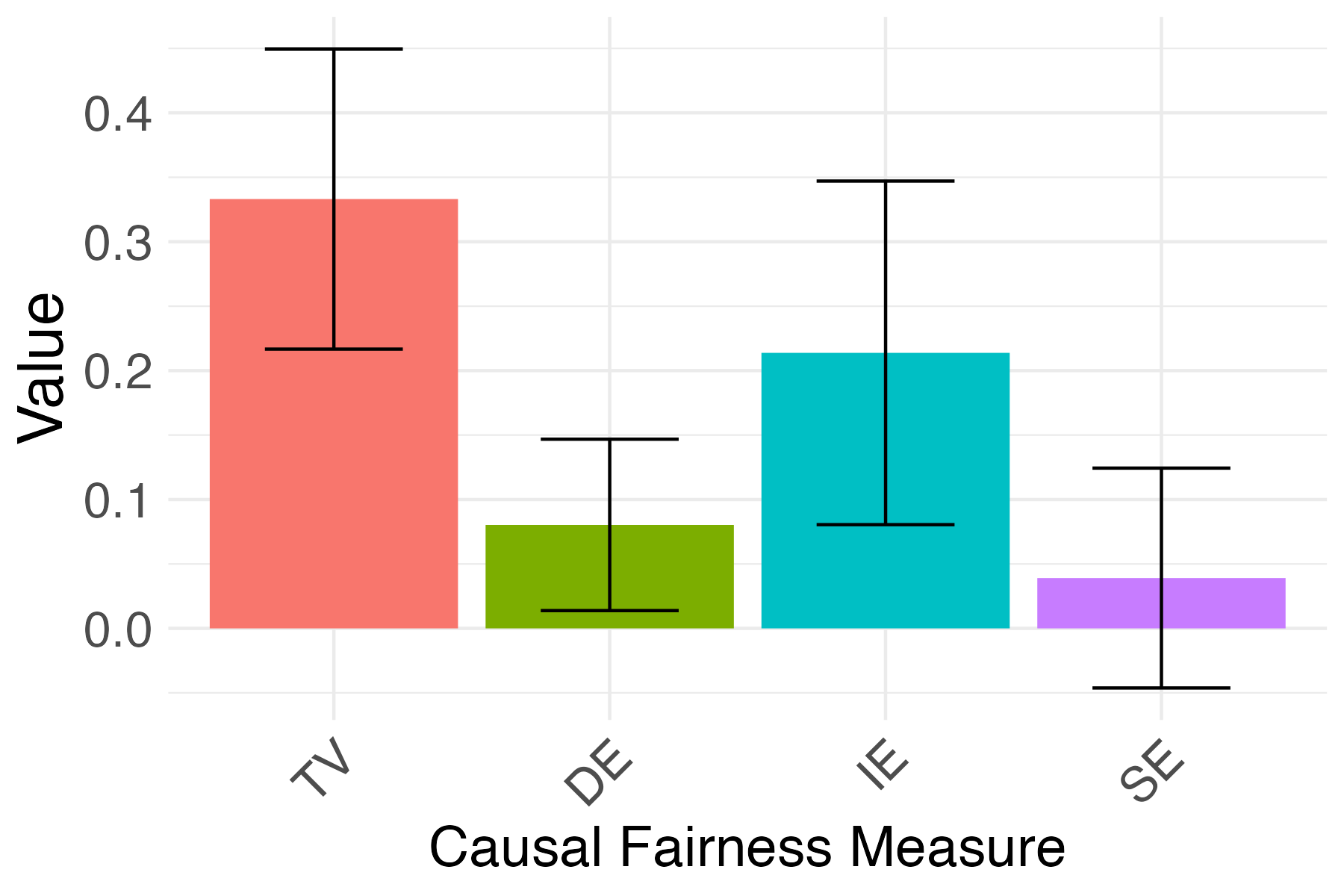}
          \caption{Benefit $\Delta$ disparity decomposition.}
          \label{fig:delta-decomposed}
     \end{subfigure}
     \caption{Causal analysis of $D^*$ and $\Delta$ using Alg.~\ref{algo:benefit-decomposition}.}
     \label{fig:algo-2-mech-vent}
\end{figure}
We note that using Alg.~\ref{algo:optimal-pbf} has reduced the disparity in resource allocation, with a substantial reduction of the direct effect (see Eq.~\ref{eq:d-curr-decomp} vs. Eq.~\ref{eq:d-star-decomp}). 

\textbf{Step 5: Apply Alg.~\ref{algo:utilitarian-pbf} to create $D^{CF}$.} In the final step, we wish to remove the direct effect of sex on the benefit $\Delta$. To construct the new policy $D^{CF}$ we will make use of Alg.~\ref{algo:utilitarian-pbf}. Firstly, we need to compute the counterfactual benefit values, in the world where $X = x_1$ along the direct pathway, while $W$ attains its natural value $W_{X(u)}(u)$. That is, we wish to estimate $\Delta_{x_1, W_{X(u)}}$ for all patients in the cohort. For the computation of the counterfactual values, we make use of the \texttt{xgboost} model developed above. In particular, we use the fitted model to estimate the potential outcomes
\begin{align}
    \widehat{Y}_{d_0, x_1, W_{X(u)}}, \widehat{Y}_{d_1, x_1, W_{X(u)}}.
\end{align}
The adjusted potential outcomes allow us to estimate $\Delta_{x_1, W_{X(u)}}$, after which we obtain the policy $D^{CF}$ that satisfies the CBF criterion from Def.~\ref{def:causal-BF}.

\begin{wrapfigure}{r}{0.42\textwidth}
\centering
\vspace{-0.25in}
\includegraphics[width=\linewidth]{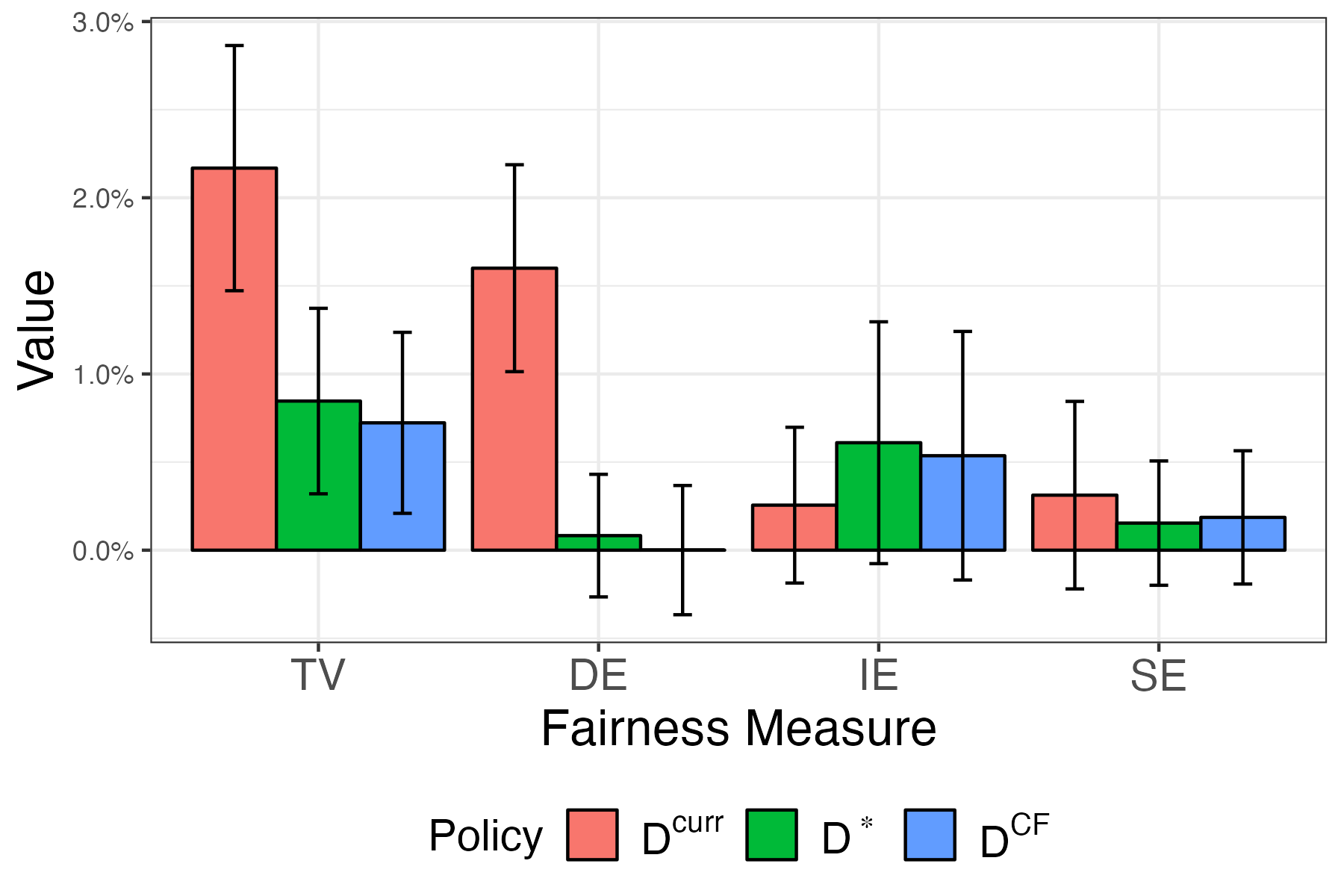}
\caption{Causal comparison of policies $D^{curr}, D^*,$ and $D^{CF}$.}
\label{fig:policy-comparison}
\end{wrapfigure}
After constructing $D^{CF}$, we have a final look at the disparity introduced by this policy. By another application of Alg.~\ref{algo:benefit-decomposition}, we obtain that 
\begin{align*}
    P(d^{CF} \mid x_1) - P(d^{CF} \mid x_0) &= \underbrace{0\%}_{\text{DE}} + \underbrace{0.5\%}_{\text{IE}} + \underbrace{0.2\%}_{\text{SE}}. \label{eq:d-cf-decomp}
\end{align*}
Therefore, we can see that the removal of the direct effect from the benefit $\Delta$ resulted in a further decrease in the overall disparity. The comparison of the causal decompositions for the original policy $D^{curr}$, optimal policy $D^*$ obtained from Alg.~\ref{algo:optimal-pbf}, and the causally fair policy $D^{CF}$ is shown in Fig.~\ref{fig:policy-comparison}.
\end{document}